\newcommand\blfootnote[1]{%
  \begingroup
  \renewcommand\thefootnote{}\footnote{#1}%
  \addtocounter{footnote}{-1}%
  \endgroup
}
 \newtheorem{theorem}{Theorem}
\newtheorem{lemma}[theorem]{Lemma}
\begin{document}

\twocolumn[

\aistatstitle{Robust Locally-Linear Controllable Embedding}

\aistatsauthor{Ershad Banijamali$^1$ \And Rui Shu$^2$ \And Mohammad Ghavamzadeh$^3$ \And Hung Bui$^3$ \And Ali Ghodsi$^1$}

\aistatsaddress{ \\$^1$University of Waterloo $\qquad\qquad$ $^2$Standford University $\qquad\qquad$ $^3$DeepMind } 
]

\begin{abstract}
Embed-to-control (E2C)~\cite{watter2015embed} is a model for solving high-dimensional optimal control problems by combining variational auto-encoders with locally-optimal controllers. However, the E2C model suffers from two major drawbacks: {\bf 1)} its objective function does not correspond to the likelihood of the data sequence and {\bf 2)} the variational encoder used for embedding typically has large variational approximation error, especially when there is noise in the system dynamics. In this paper, we present a new model for learning {\em robust} locally-linear controllable embedding (RCE). Our model directly estimates the predictive conditional density of the future observation given the current one, while introducing the bottleneck~\cite{shu2016bottleneck} between the current and future observations. Although the bottleneck provides a natural embedding candidate for control, our RCE model introduces additional specific structures in the generative graphical model so that the model dynamics can be robustly linearized. We also propose a principled variational approximation of the embedding posterior that takes the future observation into account, and thus, makes the variational approximation more robust against the noise. Experimental results show that RCE outperforms the E2C model, and does so significantly when the underlying dynamics is noisy.
\end{abstract}


\section{Introduction}
\label{sec:intro}

Model-based locally optimal control algorithms are popular in controlling non-linear dynamical systems with continuous state and action spaces. Algorithms from this class such as differential dynamic programming (DDP)~\cite{Jacobson70DD}, iterative linear quadratic regulator (iLQR)~\cite{li2004iterative}, and iterative linear quadratic Gaussian (iLQG)~\cite{Todorov05GI} have been successfully applied to a variety of complex control problems~\cite{Atkeson02NP,Tassa08RH,Levine13VP,Pan14PD}. The general idea of these methods is to iteratively linearize the non-linear dynamics around the current trajectory and then use linear quadratic methodology to derive Riccati-like equations to improve the trajectory. However, these methods assume that the model of the system is known and need relatively low-dimensional state representations. These requirements limit their usage in control of dynamical systems from raw sensory data (e.g.,~image and audio), a scenario often seen in modern reinforcement learning (RL) systems.    


Although both model-based RL and methods to find low-dimensional representations that are appropriate for control (see e.g.,~\cite{bohmer2015autonomous}) have a long history, they have recently witnessed major improvements due to the advances in the field of deep learning. Deep autoencoders~\cite{lange2010deep,wahlstrom2015pixels} have been used to obtain low-dimensional representations for control, and deep generative models have been used to develop new model-based RL algorithms. However, what is desirable in model-based locally optimal control algorithms is a representation that can be used for learning a model of the dynamical system and can also be systematically incorporated into the existing tools for planning and control. One such model is embed to control (E2C)~\cite{watter2015embed}. E2C turns the problem of locally optimal control in high-dimensional non-linear systems into one of identifying a low-dimensional latent space in which we can easily perform locally optimal control. The low-dimensional latent space is learned using a model based on variational autoencoders (VAEs)~\cite{vae2014,rezende2014stochastic} and the iLQG algorithm~\cite{Todorov05GI} is used for locally optimal control.\blfootnote{Part of the work was done when the first four authors were at Adobe Research.}



While the idea of E2C is intriguing, it suffers from two major statistical deficiencies. {\bf Firstly}, to induce the lower-dimensional embedding, at each time step $t$, E2C models the pair-marginal distribution of two adjacent observations $(\mathbf{x}_t, \mathbf{x}_{t+1})$. As a result, its loss function effectively is the sum over the pair-marginals, which is clearly not the data likelihood for the entire trajectory. Moreover, at every time step $t$, E2C needs to enforce the consistency between the posterior of the embedding and the predictive distribution of the future embedding by minimizing their KL divergence. These all indicate that the E2C loss is not a lower-bound of the likelihood of the data. The practice of modeling the pair-marginal of $(\mathbf{x}_t, \mathbf{x}_{t+1})$ using a latent variable model also imposes a Gaussian prior on the embedding space, which might be in conflict with the locally-linear constraint that we would like to impose. {\bf Secondly}, the variational inference scheme in E2C attempts to approximate the posterior of the latent embedding via a recognition model that does not depend on the future observation $\mathbf{x}_{t+1}$. We believe that this is done out of necessity, so that the locally-linear dynamics can be encoded as a constraint in the original E2C model. In an environment where the future is uncertain (e.g.,~in the presence of noise or other unknown factors), the future observation carries significant information about the true posterior of the latent embedding. Thus, a variational approximation family that does not take future observation into account, while approximating the posterior, will result in a large variational approximation error, leading to the learning of a sub-optimal model that underperforms, especially when the dynamics is noisy.

To address these issues, we take a more systematic view of the problem. Instead of mechanically applying VAE to model the pair-marginal density, we build on the recent bottleneck conditional density estimator (BCDE)~\cite{shu2016bottleneck} and directly model the predictive conditional density $p(\mathbf{x}_{t+1}\vert \mathbf{x}_{t})$. The BCDE model introduces a bottleneck random variable $\mathbf{z}_t$ in the middle of the information flow from $\mathbf{x}_{t}$ to $\mathbf{x}_{t+1}$. While this bottleneck provides a natural embedding candidate for control, these embeddings need to be structured in a way to respect the locally linear constraint of the dynamics. Our proposed model, {\em robust controllable embedding} (RCE), provides a rigorous answer to this question in the form of a generative graphical model. A key idea is to explicitly treat the reference linearization point in the locally-linear model as an additional random variable. We also propose a principled variational approximation of the embedding posterior that takes the future observation into account and optimizes a variational lower-bound of the likelihood of the data sequence. This allows our framework to provide a clean separation of the generative graphical model and the amortized variational inference mechanism (e.g.,~the recognition model). 

After a brief overview of locally-linear control and E2C in Section~\ref{sec:prelim}, we present our proposed RCE model in Section~\ref{sec:RCE}. Unlike E2C, RCE directly models the conditional density of the next observation given the current one via a form of bottleneck conditional density estimators~\cite{shu2016bottleneck}. In Section~\ref{sec:RCE}, we first describe the RCE's graphical model in details and then present the proposed variational approximation of the embedding's posterior. In Section~\ref{sec:experiments}, we apply RCE to four RL benchmarks from~\cite{watter2015embed} and show that it consistently outperforms E2C in both prediction and planning. Crucially, we demonstrate the robustness of RCE, i.e.,~as we add noise to the dynamics, RCE continues to perform reasonably well while E2C's performance degrades sharply. 


\section{Preliminaries}
\label{sec:prelim}

In this section, we first define the non-linear control problem that we are interested to solve, and then provide a brief overview of stochastic locally optimal control and the E2C model. We also motivate our proposed robust controllable embedding (RCE) model that will be presented in Section~\ref{sec:RCE}. 


\subsection{Problem Formulation}
\label{subsec:formulation}

We are interested in controlling the non-linear dynamical systems of the form
\begin{equation}
\label{eq:true_dynamic}
\mathbf{s}_{t+1} = f_{\mathcal{S}}(\mathbf{s}_t,\mathbf{u}_t) + \mathbf{n}^{\mathcal{S}},
\end{equation}
where $\mathbf{s}_t\in\mathbb{R}^{n_s}$ and $\mathbf{u}_t\in\mathbb{R}^{n_u}$ denote the state and action of the system at time step $t$, $\mathbf{n}^{\mathcal{S}}\sim\mathcal{N}(\mathbf{0},\mathbf{\Sigma}_{\mathbf{n^{\mathcal{S}}}})$ is the Gaussian system noise, and $f_{\mathcal{S}}$ is a smooth non-linear system dynamics. Note that in this case $p(\mathbf{s}_{t+1}|\mathbf{s}_t,\mathbf{u}_t)$ would be the multivariate Gaussian distribution $\mathcal{N}\big(f_{\mathcal{S}}(\mathbf{s}_t,\mathbf{u}_t),\mathbf{\Sigma}_{\mathbf{n^{\mathcal{S}}}}\big)$. We assume that we only have access to the high-dimensional observation $\mathbf{x}_t\in\mathbb{R}^{n_x}$ of each state $\mathbf{s}_t$ ($n_s \ll n_x$) and our goal is to learn a low-dimensional latent state space $\mathcal{Z}\subset \mathbb{R}^{n_z}$ ($n_z\ll n_x$) in which we perform optimal control. 



\subsection{Stochastic Locally Optimal Control}
\label{subsec:LLC}

Stochastic locally optimal control (SLOC) is based on the idea of controlling the non-linear system~\eqref{eq:true_dynamic}, along a reference trajectory $\{\bar{\mathbf{s}}_1,\bar{\mathbf{u}}_1,\ldots,\bar{\mathbf{s}}_H,\bar{\mathbf{u}}_H,\bar{\mathbf{s}}_{H+1}\}$, by transforming it to a time-varying linear quadratic regulator (LQR) problem
\begin{align}
\min_{\mathbf{u}_{1:T}}\;&\mathbb{E}\left[\sum_{t=1}^T\big((\mathbf{s}_t-\mathbf{s}^f)^\top\mathbf{Q}(\mathbf{s}_t-\mathbf{s}^f) + \mathbf{u}_t^\top\mathbf{R}\mathbf{u}_t\big)\right] \nonumber \\
&\text{s.t} \quad \mathbf{y}_{t+1} = \mathbf{A}_t\mathbf{y}_t+\mathbf{B}_t\mathbf{v}_t,
\label{eq:SLOC}
\end{align}
%
where $\mathbf{s}^f$ is the final (goal) state, $\mathbf{Q}$ and $\mathbf{R}$ are cost weighting matrices, $\mathbf{y}_t=\mathbf{s}_t-\bar{\mathbf{s}}_t$, $\mathbf{v}_t=\mathbf{u}_t-\bar{\mathbf{u}}_t$, $\bar{\mathbf{s}}_{t+1}=f_{\mathcal{S}}(\bar{\mathbf{s}}_t,\bar{\mathbf{u}}_t)$, $\mathbf{A}_t=\frac{\partial f_{\mathcal{S}}}{\partial\mathbf{s}}(\bar{\mathbf{s}}_t,\bar{\mathbf{u}}_t)$, and $\mathbf{B}_t=\frac{\partial f_{\mathcal{S}}}{\partial\mathbf{u}}(\bar{\mathbf{s}}_t,\bar{\mathbf{u}}_t)$. Eq.~\ref{eq:SLOC} indicates that at each time step $t$, the non-linear system has been locally approximated with a linear system around the reference point $(\bar{\mathbf{s}}_t,\bar{\mathbf{u}}_t)$ as 
%
\begin{align}
\label{eq:local-lqr}
\mathbf{s}_{t+1} \approx f_{\mathcal{S}}(\bar{\mathbf{s}}_t,\bar{\mathbf{u}}_t) &+ \left[\frac{\partial f_{\mathcal{S}}}{\partial\mathbf{s}}(\bar{\mathbf{s}}_t,\bar{\mathbf{u}}_t)\right](\mathbf{s}_t - \bar{\mathbf{s}}_t) \\ 
&+ \left[\frac{\partial f_{\mathcal{S}}}{\partial\mathbf{u}}(\bar{\mathbf{s}}_t,\bar{\mathbf{u}}_t)\right](\mathbf{u}_t - \bar{\mathbf{u}}_t). \nonumber
\end{align}
The RHS of Eq.~\ref{eq:SLOC} sometimes contains an offset $\mathbf{c}_t$ resulted from the linear approximation and/or noise   
\begin{equation}
\label{eq:offset}
\mathbf{y}_{t+1} = \mathbf{A}_t\mathbf{y}_t+\mathbf{B}_t\mathbf{v}_t+\mathbf{c}_t. 
\end{equation}
Eq.~\ref{eq:offset} can be seen as 
\begin{equation*}
\begin{bmatrix}
\mathbf{y}_{t+1} \\
1 
\end{bmatrix} = 
\begin{bmatrix}
\mathbf{A}_t  & \mathbf{c}_t \\
0  & 1 
\end{bmatrix}
\begin{bmatrix}
\mathbf{y}_t \\
1 
\end{bmatrix} +
\begin{bmatrix}
\mathbf{B}_t \\
0 
\end{bmatrix} 
\mathbf{v}_t,
\end{equation*}
and thus, can be easily transformed to the standard form~\eqref{eq:SLOC} by adding an extra dimension to the state as 
\begin{equation*}
\mathbf{y}'_t  = \begin{bmatrix}    \mathbf{y}_{t} \\1 \end{bmatrix}, \hspace{.5cm} 
\mathbf{A}'_t  =  \begin{bmatrix} \mathbf{A}_t  & \mathbf{c}_t\\  0  & 1 \end{bmatrix}, \hspace{.5cm}
\mathbf{B}'_t  =  \begin{bmatrix} \mathbf{B}_t  \\  0 \end{bmatrix}.
\end{equation*}
Locally optimal actions in Eq.~\ref{eq:SLOC} can be computed in closed-form by solving the local LQRs~\eqref{eq:local-lqr} using the value iteration algorithm. 

Since the quality of the control depends on the quality of the reference trajectory, SLOC algorithms are usually iterative (e.g.,~iLQR and iLQG), and at each iteration generate a better reference trajectory. At the abstract level, a SLOC algorithm operates as follows: at each iteration $k$, a reference trajectory is generated using the current policy $\pi^{(k)}$, the LQR approximation of the non-linear system is computed around this reference trajectory, and finally the next policy $\pi^{(k+1)}$ is computed by solving this LQR. The algorithm stops after a fixed number of iterations, e.g.,~$100$. 

As mentioned in Section~\ref{subsec:formulation}, since we do not have access to the true state $\mathbf{s}$, we perform the optimal control in the low-dimensional latent space $\mathbf{z}$ learned from the observations $\mathbf{x}$. Thus, all the $\mathbf{s}$'s in this section should be replaced by $\mathbf{z}$ in the following sections.

\subsection{Embed to Control (E2C) Model}
\label{subsec:E2C}


We now return to the assumption that we only observe a finite number of high-dimensional sensory data (e.g.,~images) $\mathbf{x}_t\in\mathbb{R}^{n_x}$ from the system. We denote the high-dimensional observation sequence by $\mathbf{X} = \{\mathbf{x}_1,\mathbf{x}_2,...,\mathbf{x}_N\}$. Note that the observations are selected such that the sequence $\mathbf{X}$ is Markovian. For example, $\mathbf{x}$ could be a set of buffered observed images of the system that encodes all the information about the past. Depending on the system, this set may have only one or multiple images. 

It is clear that direct control in $\mathbb{R}^{n_x}$ is complicated because of its high-dimensional nature. However, when the true underlying state space is low-dimensional, it would be possible to embed the high-dimensional observations in a low-dimensional latent space $\mathcal{Z}$, in a way that the dynamics of the system can be captured by a much simpler model, which can then be used for optimal control. This general strategy is known as embed to control (E2C)~\cite{watter2015embed}. Note that a suitable embedding function is sufficient for model-based control, we do not need to recover the true state $\mathbf{s}_t$. 

We denote by $\mathbf{z}_t$ the low-dimensional embedding of $\mathbf{x}_t$. E2C first introduces a new variable $\hat{\mathbf{z}}_{t+1}$ as the result of applying $\mathbf{u}_t$ to the latent dynamics $f_\mathcal{Z}$, i.e.,
\begin{equation}
\hat{\mathbf{z}}_{t+1} = f_{\mathcal{Z}}(\mathbf{z}_t,\mathbf{u}_t) + \mathbf{n}^{\mathcal{Z}}_t,
\label{eq:e2c_dynamic}
\end{equation}
where $\mathbf{n}^{\mathcal{Z}}_t$ denotes the transition noise in the latent space. E2C employs the pair $(\mathbf{z}_t,\hat{\mathbf{z}}_{t+1})$ as the latent variables that model the pair-marginal $p(\mathbf{x}_t,\mathbf{x}_{t+1})$. It uses the variational recognition network $q(\mathbf{z}_t\vert\mathbf{x}_t)$, while forcing $q(\mathbf{\hat{z}}_{t+1} | \mathbf{z}_t, \mathbf{u}_t)$ to be the generative dynamics of Eq.~\ref{eq:e2c_dynamic}. This leads to the following lower-bound of the pair-marginal
\begin{align}
&\log p(\mathbf{x}_t, \mathbf{x}_{t+1} \vert \mathbf{u}_t) \ge \mathcal{L}_t^{\text{bound}}(\mathbf{x}_t,\mathbf{u}_t,\mathbf{x}_{t+1}) \nonumber\\
&\hspace{.2cm}= \mathbb{E}_{q(\mathbf{z}_t | \mathbf{x}_t)q(\mathbf{\hat{z}}_{t+1} | \mathbf{z}_t, \mathbf{u}_t)} \big [\log p(\mathbf{x}_t|\mathbf{z}_t) \nonumber \\  
&\hspace{.2cm}\phantom{{}={}}+\log p(\mathbf{x}_{t+1}|\hat{\mathbf{z}}_{t+1})\big] - \text{KL}\big(q(\mathbf{z}_t | \mathbf{x}_t) \parallel p(\mathbf{z}_t)\big).  
\end{align}
Local linearization of the dynamics is enforced inside the recognition model $q(\mathbf{\hat{z}}_{t+1} | \mathbf{z}_t, \mathbf{u}_t)$, where mapping from a linearization point $\bar{\mathbf{z}}_t$ to the linearization matrices are estimated via neural networks.

Finally, we want $\mathbf{z}_{t+1}$ to be both the embedding of $\mathbf{x}_{t+1}$ and the result of applying $\mathbf{u}_t$ to $\mathbf{z}_t$. E2C attempts to enforce this temporal consistency criterion by encouraging the distributions of $\hat{\mathbf{z}}_{t+1}$ and the next step embedding $\mathbf{z}_{t+1}$ to be similar (in the KL sense). Enforcing the temporal consistency leads to the modified objective
\begin{equation}
\mathcal{L}_t = \mathcal{L}_t^{\text{bound}} - \lambda \text{KL}\big(q(\hat{\mathbf{z}}_{t+1}| \mathbf{z}_t, \mathbf{u}_t) \parallel q(\mathbf{z}_{t+1}|\mathbf{x}_{t+1})\big),
\end{equation}
where $\lambda$ is an additional hyperparameter of the model. We note that neither of the two objectives $\sum_t \mathcal{L}_t^{\text{bound}}$ and $\sum_t \mathcal{L}_t$ is a lower-bound of the data likelihood $p(\mathbf{X})$. The fact that E2C does not optimize a proper lower-bound of the data likelihood has also been observed by~\cite{karl2016deep}.

Compared to E2C, our method is based on introducing a graphical model that clearly separates the generative model from the variational recognition model. This enables us to handle noise in the system and avoid heuristic terms in the objective functions that need extra hyperparameter tuning. Furthermore, we can optimize a lower-bound on the likelihood of the data sequence using a better-designed recognition model more robust w.r.t.~noise. Note that our goal is not to purely obtain the best predictive power as in~\cite{karl2016deep}, but to design a predictive model that yields a suitable embedding representation for locally optimal control. Unlike~\cite{karl2016deep} that does not report control performance, our experiments focus on the performance of the controller under various noise regimes. In the next section, we describe our proposed RCE model and demonstrate how it addresses the aforementioned issues of E2C. We give a deconstruction of the E2C equations to provide its graphical model in the appendix \ref{sec:appendix-E2C}.


\section{Model Description}
\label{sec:RCE}


In this section, we first introduce our graphical model that represents the relation between the observations and latent variables in our model. We then derive a lower-bound on the likelihood of the observation sequence. The objective of training in our model is to maximize this lower-bound. Finally, we describe the details of the method we use for planning in the latent space learned by our model. 


\subsection{Graphical Model}
\label{subsec:GM}

We propose to learn an action-conditional density model of the observations $\mathbf{x}_{1:N}$. Similar to E2C, we assume that the observation sequence is Markovian. Thus, optimizing the likelihood $p(\mathbf{x}_{1:N}\vert \mathbf{u}_{1:N})$ reduces to learning an action-conditional generative model that can be trained via maximum likelihood, i.e.,
\begin{equation}
\max_\theta\;\log p_\theta(\mathbf{x}_{t + 1} | \mathbf{x}_t, \mathbf{u}_t),
\label{eq:loglike}
\end{equation}
where the prediction of the next observation $\mathbf{x}_{t+1}$ depends only on the current $\mathbf{x}_t$ and action $\mathbf{u}_t$. Note that our generative model is parameterized by $\theta$. For notational simplicity, we shall omit $\theta$ in our presentation.

We first discuss how to learn a low-dimensional representation of $\mathbf{x}$ that adheres to globally linear dynamics by incorporating several constraints into the structure of our generative model. First, we introduce the latent variables $\mathbf{z}_t$ and $\mathbf{\hat{z}}_{t+1}$ that serve as information bottlenecks between $\mathbf{x}_t$ and $\mathbf{x}_{t+1}$, such that
\begin{align}
&p(\mathbf{x}_{t + 1}, \mathbf{z}_{t}, \mathbf{\hat{z}}_{t+1} | \mathbf{x}_t, \mathbf{u}_t) \nonumber\\
&\hspace{0.2cm}=p(\mathbf{z}_{t}| \mathbf{x}_t) p(\mathbf{\hat{z}}_{t+1}| \mathbf{z}_{t}, \mathbf{u}_t) p(\mathbf{x}_{t + 1} | \mathbf{\hat{z}}_{t+1}).
\end{align}
Intuitively, it is natural to interpret $\mathbf{z}_t$ and $\mathbf{\hat{z}}_{t+1}$ to be stochastic embeddings of $\mathbf{x}_t$ and $\mathbf{x}_{t+1}$, respectively.

Next, we enforce global linearity of $p(\mathbf{\hat{z}}_{t+1}| \mathbf{z}_{t}, \mathbf{u}_t)$ by restricting it to be a deterministic, linear transition function of the form
\begin{align}
\hat{\mathbf{z}}_{t+1} = \mathbf{A}\mathbf{z}_t + \mathbf{B}\mathbf{u}_t + \mathbf{c},
\end{align}
where $\mathbf{A}$, $\mathbf{B}$, and $\mathbf{c}$ are matrices that respectively define the state dynamics, control dynamics, and the offset. To emphasize the deterministic nature of this transition, we replace all the subsequent mentions of deterministic $p(\cdot | \cdot)$ transitions with $\delta(\cdot | \cdot)$.

In order to learn more expressive transition dynamics, we relax the global linearity constraint to a local one. Unlike global linearity, local linearity requires a linearization point. To account for this, we introduce additional variables $\mathbf{\bar{z}}_t$ and $\mathbf{\bar{u}}_t$ to serve as the linearization point, which results in a new generative model (see the black arrows in Figure~\ref{fig:models}),
\begin{align}
\label{eq:gen-model}
p(&\mathbf{x}_{t + 1}, \mathbf{z}_{t}, \mathbf{\bar{z}}_{t}, \mathbf{\hat{z}}_{t+1} | \mathbf{x}_t, \mathbf{u}_t, \mathbf{\bar{u}}_t) = \nonumber \\
&p(\mathbf{z}_{t}| \mathbf{x}_t)  p(\mathbf{\bar{z}}_{t}| \mathbf{x}_t) 
 \delta(\mathbf{\hat{z}}_{t+1}| \mathbf{z}_{t}, \mathbf{\bar{z}}_{t}, \mathbf{u}_t, \mathbf{\bar{u}}_t)  p(\mathbf{x}_{t + 1} | \mathbf{\hat{z}}_{t+1}),
\end{align}
whose corresponding deterministic transition function for $\delta(\mathbf{\hat{z}}_{t+1}| \mathbf{z}_{t}, \mathbf{\bar{z}}_{t}, \mathbf{u}_t, \mathbf{\bar{u}}_t)$ is
\begin{equation}
\label{eq: latent_lin_dyn}
\hat{\mathbf{z}}_{t+1} = \mathbf{A}_t(\bar{\mathbf{z}}_t,\mathbf{\bar{u}}_t)\mathbf{z}_t + \mathbf{B}_t(\bar{\mathbf{z}}_t,\mathbf{\bar{u}}_t)\mathbf{u}_t + \mathbf{c}_t(\bar{\mathbf{z}}_t,\mathbf{\bar{u}}_t).
\end{equation}
Here, $\mathbf{A}$, $\mathbf{B}$, and $\mathbf{c}$ are functions of $(\bar{\mathbf{z}}_t,\mathbf{\bar{u}}_t)$, and can be parameterized by neural networks. Since the linearization variable $\mathbf{\bar{z}}_t$ is not known in advance, we treat $\mathbf{\bar{z}}_t$ as a random variable with distribution $p(\mathbf{\bar{z}}_{t}| \mathbf{x}_t)$. A natural consideration for $p(\mathbf{\bar{z}}_{t}| \mathbf{x}_t)$ is to set it to be identical to $p(\mathbf{z}_{t}| \mathbf{x}_t)$ \emph{a priori}. This has the effect of making the iLQR controller robust to stochastic sampling of $z_{t}$ during planning. The linearization variable $\mathbf{\bar{u}}_t$ can be obtained from a local perturbation of action $\mathbf{u}_t$.

\begin{figure}[!b]
    \centering
    \includegraphics[trim = 0mm 7mm 0mm 10mm,height=4.7cm]{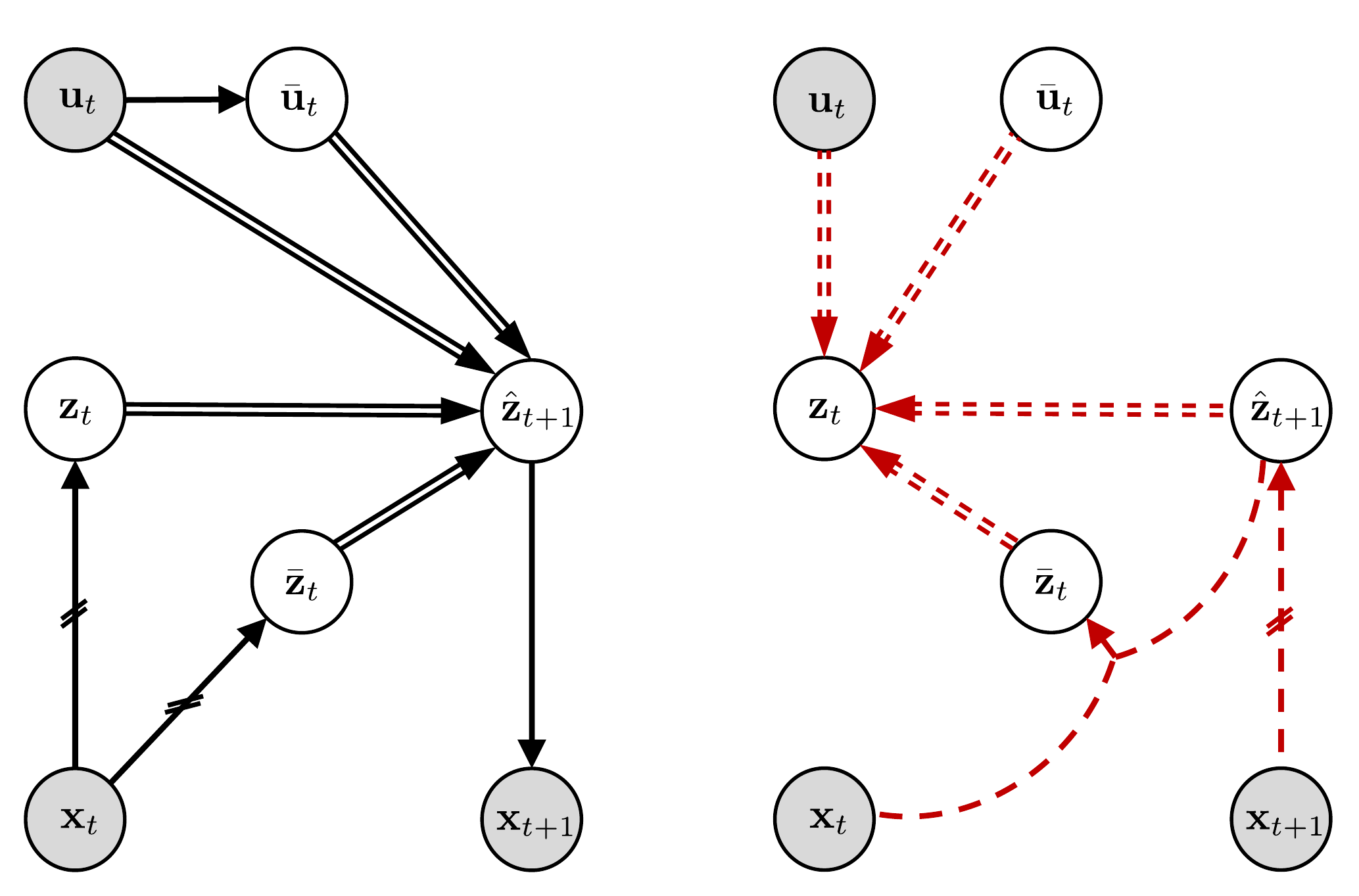} 
    \caption{RCE graphical model. Left: generative links and, right: recognition model. Parallel lines mean deterministic links, while single lines mean stochastic links (a link that involves in sampling). $\mathbf{z}_t $ and $\bar{\mathbf{z}}_t$ are two samples from $p(\mathbf{z}|\mathbf{x})$. We use a single network (the encoder network) to model the conditional probability of the links with the hatch marks. } \label{fig:models}%
\end{figure}


\subsection{Deep Variational Learning}
\label{subsec:DVL}

Training the generative model in Eq.~\ref{eq:gen-model} using maximum likelihood is intractable, since it requires the marginalization of the latent variables. Therefore, we propose to use deep variational inference~\cite{vae2014,rezende2014stochastic} and maximize the variational lower-bound of the log-likelihood, instead. The variational lower-bound requires us to define a variational approximation to the true posterior 

\vspace{-0.15in}
\begin{small}
\begin{align*}
q(\mathbf{z}, \mathbf{\bar{z}}_t, \mathbf{\hat{z}}_{t+1} | \mathbf{x}_t, \mathbf{x}_{t+1}, \mathbf{u}_t, \mathbf{\bar{u}}_t) \approx
p(\mathbf{z}, \mathbf{\bar{z}}_t, \mathbf{\hat{z}}_{t+1} | \mathbf{x}_t, \mathbf{x}_{t+1}, \mathbf{u}_t, \mathbf{\bar{u}}_t).
\end{align*}
\end{small}
\vspace{-0.2in}

In adherence to the interpretation of $\mathbf{z}_t$ and $\mathbf{\hat{z}}_{t+1}$ as stochastic embeddings of $\mathbf{x}_t$ and $\mathbf{x}_{t+1}$, it is important to enforce consistency between $p(\mathbf{\hat{z}}_{t+1} | \mathbf{x}_t, \mathbf{x}_{t+1}, \mathbf{u}_t, \mathbf{\bar{u}}_t)$ and the next step probability of the embedding given the observation $p(\mathbf{z}_{t+1} | \mathbf{x}_{t+1})$. Since we do not have access to $p(\mathbf{\hat{z}}_{t+1} |\mathbf{x}_t, \mathbf{x}_{t+1}, \mathbf{u}_t, \mathbf{\bar{u}}_t)$, we instead encourage this consistency through posterior regularization by explicitly setting
\begin{align}
\label{eq:pequalq}
q_\phi(\mathbf{\hat{z}}_{t+1} | \mathbf{x}_{t+1}) = p(\mathbf{z}_{t+1} | \mathbf{x}_{t+1}).
\end{align}
Next, we propose a novel factorization of the full variational posterior as
\begin{align}
\label{eq:q_fac}
&q(\mathbf{z}, \mathbf{\bar{z}}_t, \mathbf{\hat{z}}_{t+1} | \mathbf{x}_t, \mathbf{x}_{t+1}, \mathbf{u}_t, \mathbf{\bar{u}}_t) \\
&\hspace{0.2cm}= 
q_\phi(\mathbf{\hat{z}}_{t+1} | \mathbf{x}_{t+1}) 
q_\varphi(\mathbf{\bar{z}}_{t} | \mathbf{x}_{t}, \mathbf{\hat{z}}_{t+1}) 
\delta(\mathbf{z}_{t} | \mathbf{\hat{z}}_{t+1}, \mathbf{\bar{z}}_t, \mathbf{u}_t, \mathbf{\bar{u}}_t), \nonumber
\end{align}
where $q_\varphi(\mathbf{\bar{z}}_{t} | \mathbf{x}_{t}, \mathbf{\hat{z}}_{t+1})$ is the \emph{backward encoder} and $\delta(\mathbf{z}_{t} | \mathbf{\hat{z}}_{t+1}, \mathbf{\bar{z}}_t, \mathbf{u}_t, \mathbf{\bar{u}}_t)$ is the deterministic \emph{reverse transition}. Our choice of factorization results in a recognition model that contrasts sharply with that in E2C. First, our recognition model properly conditions the inference of $\hat{\mathbf{z}}_{t+1}$ on the observation $\mathbf{x}_{t+1}$. Second, our recognition model explicitly accounts for the deterministic transition in the generative model; inference of the deterministic transition can be performed in closed-form using a deterministic reverse transition that recovers $\mathbf{z}_{t}$ as a function of $\mathbf{\bar{z}}_t$, $\mathbf{u}_t$, $\bar{\mathbf{u}}_t$, and $\mathbf{\hat{z}}_{t+1}$. To be consistent with Eq.~\ref{eq: latent_lin_dyn}, we require that
\begin{align*}
\mathbf{z}_t = \mathbf{A}_t^{-1}(\bar{\mathbf{z}}_t,\bar{\mathbf{u}}_t)\big(\hat{\mathbf{z}}_{t+1} - \mathbf{B}_t(\bar{\mathbf{z}}_t,\bar{\mathbf{u}}_t)\mathbf{u}_t - \mathbf{c}_t(\bar{\mathbf{z}}_t,\bar{\mathbf{u}}_t)\big).
\end{align*}
During the training of the generative model, we only need to access the inverse of $\mathbf{A}_t$. As such, we propose to directly train a network that outputs its inverse $\mathbf{M}_t(\bar{\mathbf{z}}_t,\bar{\mathbf{u}}_t) = \mathbf{A}_t^{-1}(\bar{\mathbf{z}}_t,\bar{\mathbf{u}}_t)$. To make sure that $\mathbf{M}_t$ is an invertible matrix and to enable efficient estimation, we restrict $\mathbf{M}_t$ to be a rank-one perturbation of the identity matrix, i.e.,
\begin{equation}
\mathbf{M}_t = \mathbf{I}_{n_z} + \mathbf{w}_t(\bar{\mathbf{z}}_t,\bar{\mathbf{u}}_t)\cdot\mathbf{r}_t(\bar{\mathbf{z}}_t,\bar{\mathbf{u}}_t)^\top,
\label{eq:M}
\end{equation}
where $\mathbf{I}_{n_z}$ is the identity matrix of size $n_z$, and $\mathbf{w}_t$ and $\mathbf{r}_t$ are two column vectors in $\mathbb{R}^{n_z}$. We constraint these vectors to be non-negative using a non-negative activation at their corresponding output layers.


We now formally define the RCE loss and its lower-bound property.
\begin{lemma}
\label{lm:lem}
Let $\mathcal{L}_t^{\text{RCE}}$ be defined as
\begin{align}
\label{eq:RCE_obj_func_1}
&\mathcal{L}_t^{\text{RCE}}  = \mathbb{E}_{q_{\phi}(\hat{\mathbf{z}}_{t+1}|\mathbf{x}_{t+1})} \big [\log p(\mathbf{x}_{t+1}|\hat{\mathbf{z}}_{t+1}) \big ] \\
&- \mathbb{E}_{q_{\phi}(\hat{\mathbf{z}}_{t+1}|\mathbf{x}_{t+1})} \big [ \text{KL} \big ( q_{\varphi}(\bar{\mathbf{z}}_t| \hat{\mathbf{z}}_{t+1},\mathbf{x}_t) \parallel p(\bar{\mathbf{z}}_t| \mathbf{x}_t)  \big ) \big ] \nonumber \\
&+ \text{H} \big ( q_{\phi}(\hat{\mathbf{z}}_{t+1}| \mathbf{x}_{t+1}) \big )  + \mathbb{E}_{\substack{ q_{\phi}(\hat{\mathbf{z}}_{t+1}|\mathbf{x}_{t+1})\\q_{\varphi}(\bar{\mathbf{z}}_t|\mathbf{x}_t,\hat{\mathbf{z}}_{t+1}) } }\big [ \log p(\mathbf{z}_t| \mathbf{x}_t) \big ]. \nonumber
\end{align}
Subject to the constraints we explicitly impose on $q$, $\mathcal{L}_t^{\text{RCE}}$ is a lower-bound on the conditional log-likelihood $\log p(\mathbf{x}_{t+1}| \mathbf{x}_t, \mathbf{u}_t, \bar{\mathbf{u}}_t)$, which in trun defines a lower-bound on the conditional likelihood of our interest, i.e.,~$p(\mathbf{x}_{t+1}| \mathbf{x}_t, \mathbf{u}_t)=\int p(\mathbf{x}_{t+1}| \mathbf{x}_t, \mathbf{u}_t, \bar{\mathbf{u}}_t)\;p(\bar{\mathbf{u}}_t | \mathbf{u}_t)\;d\bar{\mathbf{u}}_t$.
\end{lemma}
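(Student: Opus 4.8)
The plan is to derive $\mathcal{L}_t^{\text{RCE}}$ as the evidence lower bound (ELBO) associated with the generative factorization in Eq.~\eqref{eq:gen-model} and the variational factorization in Eq.~\eqref{eq:q_fac}, and then to control the gap created by the deterministic links. First I would write the target as a marginalization over the three latents, $\log p(\mathbf{x}_{t+1}\vert \mathbf{x}_t,\mathbf{u}_t,\bar{\mathbf{u}}_t) = \log \int p(\mathbf{x}_{t+1},\mathbf{z}_t,\bar{\mathbf{z}}_t,\hat{\mathbf{z}}_{t+1}\vert \mathbf{x}_t,\mathbf{u}_t,\bar{\mathbf{u}}_t)\,d\mathbf{z}_t\,d\bar{\mathbf{z}}_t\,d\hat{\mathbf{z}}_{t+1}$, insert the variational posterior $q$ as a multiply-and-divide factor, and apply Jensen's inequality to get $\log p \ge \mathbb{E}_q[\log p - \log q]$. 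Substituting the two factorizations, the stochastic encoder densities immediately produce the reconstruction term $\mathbb{E}_q[\log p(\mathbf{x}_{t+1}\vert\hat{\mathbf{z}}_{t+1})]$, the entropy $\text{H}(q_\phi(\hat{\mathbf{z}}_{t+1}\vert\mathbf{x}_{t+1}))$, and the ratio $\log\frac{p(\bar{\mathbf{z}}_t\vert\mathbf{x}_t)}{q_\varphi(\bar{\mathbf{z}}_t\vert\mathbf{x}_t,\hat{\mathbf{z}}_{t+1})}$, whose $q$-expectation is the negative KL term.

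The delicate step, and the one I expect to be the main obstacle, is reconciling the two deterministic links: the generative model carries $\delta(\hat{\mathbf{z}}_{t+1}\vert\mathbf{z}_t,\bar{\mathbf{z}}_t,\mathbf{u}_t,\bar{\mathbf{u}}_t)$ while $q$ carries $\delta(\mathbf{z}_t\vert\hat{\mathbf{z}}_{t+1},\bar{\mathbf{z}}_t,\mathbf{u}_t,\bar{\mathbf{u}}_t)$, i.e.~Dirac masses on different coordinates, so their ratio is not naively defined. I would resolve this by noting that, with $\bar{\mathbf{z}}_t,\mathbf{u}_t,\bar{\mathbf{u}}_t$ held fixed, the forward map $\hat{\mathbf{z}}_{t+1}=\mathbf{A}_t\mathbf{z}_t+\mathbf{B}_t\mathbf{u}_t+\mathbf{c}_t$ and the reverse transition of Eq.~\eqref{eq: latent_lin_dyn} are a bijection and its inverse, so both deltas are supported on the same manifold. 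Using the change-of-variables identity for Dirac deltas, $\delta(\hat{\mathbf{z}}_{t+1}-g(\mathbf{z}_t))=|\det\mathbf{A}_t|^{-1}\,\delta(\mathbf{z}_t-g^{-1}(\hat{\mathbf{z}}_{t+1}))$ with $\partial g/\partial\mathbf{z}_t=\mathbf{A}_t$, I can rewrite the generative delta in the coordinate used by $q$; the two masses then cancel, leaving a Jacobian factor $|\det\mathbf{A}_t|^{-1}=|\det\mathbf{M}_t|$ together with $p(\mathbf{z}_t\vert\mathbf{x}_t)$ evaluated at the reverse-transition value of $\mathbf{z}_t$, which yields the fourth term. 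Collecting everything, the true ELBO equals $\mathcal{L}_t^{\text{RCE}}+\mathbb{E}_q[\log|\det\mathbf{M}_t|]$.

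It then remains to show that discarding this leftover Jacobian term keeps the inequality pointing the right way, i.e.~that $\mathbb{E}_q[\log|\det\mathbf{M}_t|]\ge 0$; this is precisely where the structural constraints on $q$ enter. Since $\mathbf{M}_t=\mathbf{I}_{n_z}+\mathbf{w}_t\mathbf{r}_t^\top$ is a rank-one perturbation of the identity as in Eq.~\eqref{eq:M}, the matrix-determinant lemma gives $\det\mathbf{M}_t=1+\mathbf{r}_t^\top\mathbf{w}_t$, and because $\mathbf{w}_t$ and $\mathbf{r}_t$ are forced non-negative by their output activations we have $\mathbf{r}_t^\top\mathbf{w}_t\ge 0$, hence $\det\mathbf{M}_t\ge 1$ and $\log|\det\mathbf{M}_t|\ge 0$ pointwise. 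Therefore $\mathcal{L}_t^{\text{RCE}}=\text{ELBO}-\mathbb{E}_q[\log|\det\mathbf{M}_t|]\le \text{ELBO}\le \log p(\mathbf{x}_{t+1}\vert\mathbf{x}_t,\mathbf{u}_t,\bar{\mathbf{u}}_t)$, giving the first claim. Finally, for the marginal conditional likelihood I would apply Jensen once more, $\log p(\mathbf{x}_{t+1}\vert\mathbf{x}_t,\mathbf{u}_t)=\log\int p(\mathbf{x}_{t+1}\vert\mathbf{x}_t,\mathbf{u}_t,\bar{\mathbf{u}}_t)\,p(\bar{\mathbf{u}}_t\vert\mathbf{u}_t)\,d\bar{\mathbf{u}}_t\ge \mathbb{E}_{p(\bar{\mathbf{u}}_t\vert\mathbf{u}_t)}[\log p(\mathbf{x}_{t+1}\vert\mathbf{x}_t,\mathbf{u}_t,\bar{\mathbf{u}}_t)]\ge \mathbb{E}_{p(\bar{\mathbf{u}}_t\vert\mathbf{u}_t)}[\mathcal{L}_t^{\text{RCE}}]$, so the per-$\bar{\mathbf{u}}_t$ bound integrates against $p(\bar{\mathbf{u}}_t\vert\mathbf{u}_t)$ into a bound on the quantity of interest.
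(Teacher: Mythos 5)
Your proposal is correct and shares the paper's overall skeleton---the same marginalization identity over $\bar{\mathbf{u}}_t$, and the same Jensen/ELBO step pitting the generative factorization \eqref{eq:gen-model} against the variational factorization \eqref{eq:q_fac}, yielding the same four terms---but you genuinely diverge at the treatment of the deterministic links, and your version is the more rigorous one. The paper's proof simply asserts in step (b) that ``the terms that contain $\delta(\cdot|\cdot)$ are zero,'' i.e., it lets $\log\delta(\hat{\mathbf{z}}_{t+1}|\mathbf{z}_t,\bar{\mathbf{z}}_t,\mathbf{u}_t,\bar{\mathbf{u}}_t)$ and $\log\delta(\mathbf{z}_t|\bar{\mathbf{z}}_t,\hat{\mathbf{z}}_{t+1},\mathbf{u}_t,\bar{\mathbf{u}}_t)$ cancel even though these are Dirac masses in different coordinates; as you observe, rewriting the forward delta in the reverse coordinate leaves the Jacobian factor $|\det\mathbf{A}_t|^{-1}=|\det\mathbf{M}_t|$, so the true ELBO is $\mathcal{L}_t^{\text{RCE}}+\mathbb{E}_q\big[\log|\det\mathbf{M}_t|\big]$ and the paper silently discards the Jacobian term. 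Your additional step---the matrix determinant lemma giving $\det\mathbf{M}_t=1+\mathbf{r}_t^\top\mathbf{w}_t\ge 1$ from the non-negativity of $\mathbf{w}_t,\mathbf{r}_t$ in \eqref{eq:M}, hence $\mathbb{E}_q[\log|\det\mathbf{M}_t|]\ge 0$---is exactly what makes discarding this term safe, and it gives concrete content to the lemma's clause ``subject to the constraints we explicitly impose on $q$'': the paper's own proof never invokes the non-negativity constraint, whereas your argument shows it is precisely what keeps the inequality pointing the right way. Two minor remarks: manipulating logarithms of Dirac deltas is distributionally informal, and you can make your step airtight by instead changing variables $(\mathbf{z}_t,\bar{\mathbf{z}}_t)\mapsto(\hat{\mathbf{z}}_{t+1},\bar{\mathbf{z}}_t)$ in the likelihood integral before applying Jensen, which produces the same $|\det\mathbf{M}_t(\bar{\mathbf{z}}_t,\bar{\mathbf{u}}_t)|$ factor without ever forming a delta ratio (note $\mathbf{M}_t$ depends on $\bar{\mathbf{z}}_t$, so the Jacobian stays inside the $\bar{\mathbf{z}}_t$-expectation, as your $\mathbb{E}_q$ correctly reflects); and your closing Jensen step over $p(\bar{\mathbf{u}}_t|\mathbf{u}_t)$ makes explicit what the paper leaves implicit in the phrase ``which in turn defines a lower-bound.''
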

\begin{proof}
See Appendix~\ref{sec:obj-func}.
\end{proof}


\begin{figure*}[!t]
\centering
\includegraphics[trim = 0mm 0mm 0mm 10mm,width=16cm]{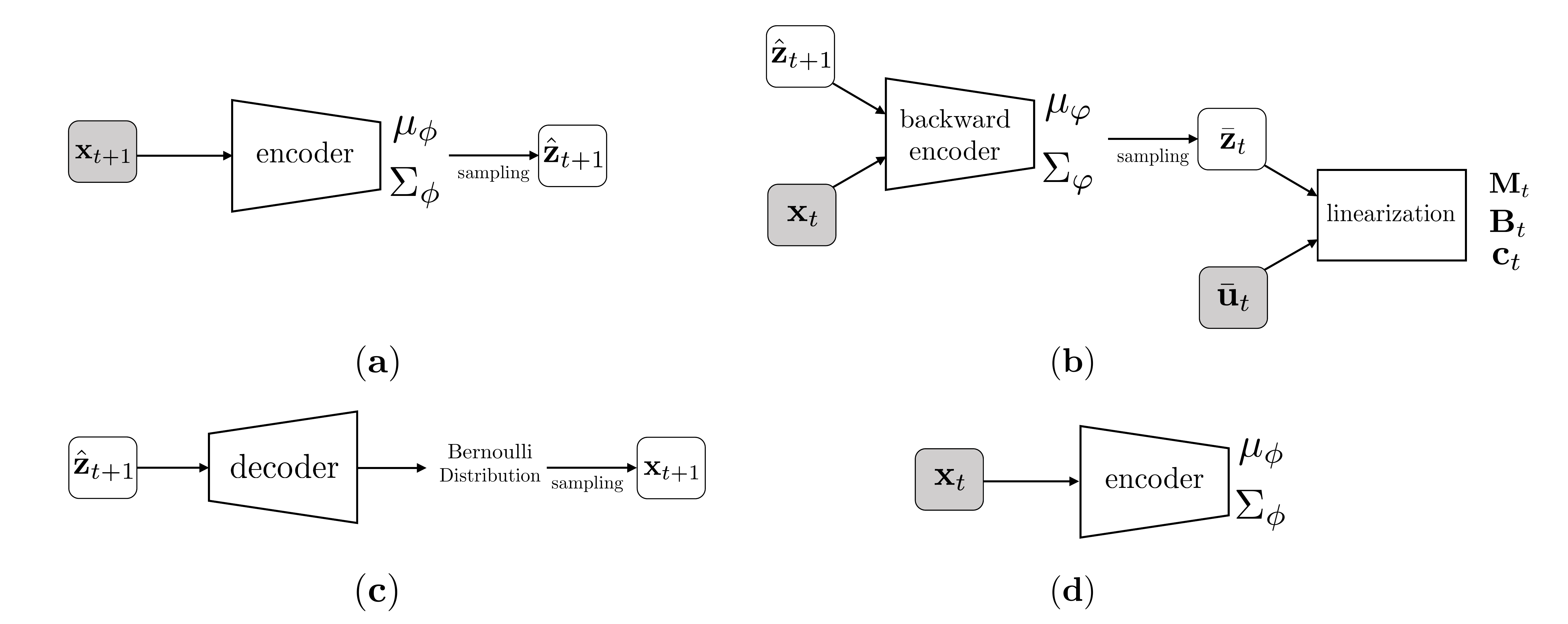} 
\caption{Schematic of the networks that are used for modeling the probabilities in our model. The gray boxes contain input (observable) variables. \textbf{(a) }Encoder network that models $q_{\phi}(\hat{\mathbf{z}}_{t+1}|\mathbf{x}_{t+1}) = \mathcal{N}\big(\mu_{\phi}(\mathbf{x}_{t+1}),\mathbf{\Sigma}_{\phi}(\mathbf{x}_{t+1})\big)$.  \textbf{(b)} Transition network that contains two parts. One part, denoted by ``backward encoder", models  $q_{\varphi}(\bar{\mathbf{z}}_t|\mathbf{x}_t,\hat{\mathbf{z}}_{t+1}) = \mathcal{N}\big(\mu_{\varphi}(\mathbf{x}_t,\hat{\mathbf{z}}_{t+1}),\mathbf{\Sigma}_{\varphi}(\mathbf{x}_t,\hat{\mathbf{z}}_{t+1})\big)$, and the other part, denoted by ``linearization", is used to obtain $\mathbf{M}_t$, $\mathbf{B}_t$, and $\mathbf{c}_t$, which are the parameters of the locally linear model in the latent space. \textbf{(c)} Decoder network that models $p(\mathbf{x}_{t+1}|\hat{\mathbf{z}}_{t+1})$. In our experiments we assume that this distribution is Bernoulli. Therefore, we use sigmoid nonlinearity at the last layer of the decoder. $\bar{\mathbf{x}}_{t+1}$ is the reconstructed version of $\mathbf{x}_{t+1}$. \textbf{(d)} The network that models $p(\mathbf{z}_t|\mathbf{x}_t)$. According to Eq. \ref{eq:pequalq}, since  $p(\mathbf{z}_t|\mathbf{x}_t) = q_{\phi}(\mathbf{z}_t|\mathbf{x}_t)$ and therefore  we tie the parameters of this network with the encoder network, $p(\mathbf{z}_t|\mathbf{x}_t) = \mathcal{N}(\mu_{\phi}(\mathbf{x}_t),\mathbf{\Sigma}_{\phi}(\mathbf{x}_t))$. Note that $p(\mathbf{z}_t|\mathbf{x}_t) $ is the same as $p(\bar{\mathbf{z}}_t|\mathbf{x}_t)$. Thus, the KL term in~\eqref{eq:RCE_obj_func_1} can be written as $\text{KL}\big(\mathcal{N}(\mu_{\varphi},\mathbf{\Sigma}_{\varphi}) \parallel \mathcal{N}(\mu_{\phi}(\mathbf{x}_t),\mathbf{\Sigma}_{\phi}(\mathbf{x}_t))\big)$.
}
\label{fig: network_structure}
\end{figure*}

Figure~\ref{fig:models} contains a graphical representation of our model. It is important to note that unlike the E2C encoder (Eq.~8 in~\cite{watter2015embed}), our recognition model takes the future state $\mathbf{x}_{t+1}$ as input. In the case of noisy dynamics, the future state heavily influences the posterior. Thus, E2C's failure to incorporate the future state into the variational approximation of the posterior could be detrimental to the performance of the system in the noisy regime. We clearly demonstrate this phenomenon in our experiments.

\subsection{Network Structure}
\label{subsec:Network}

For the four problems used in our experiments in Section~\ref{sec:experiments}, we use feedforward networks for encoding, decoding, and transition. Depending on the input image size, the encoder and decoder can have fully-connected layers or convolutional layers. The transition networks always have fully-connected layers. According to Eq.~\ref{eq:RCE_obj_func_1}, we need to model four different conditional probabilities: $p(\mathbf{x}_{t+1}|\hat{\mathbf{z}}_{t+1})$, $q_{\phi}(\hat{\mathbf{z}}_{t+1}| \mathbf{x}_{t+1}) $, $q_{\varphi}(\bar{\mathbf{z}}_t| \hat{\mathbf{z}}_{t+1},\mathbf{x}_t)$, and $p(\mathbf{z}_t| \mathbf{x}_t)$. Figure~\ref{fig: network_structure} shows the high-level depiction of the networks and the connection between different variables used in these probabilities.


\subsection{Planning}
\label{subsec:planning}

We use the iLQR algorithm to plan in the latent space $\mathcal{Z}$. A good latent space representation should allow us not only to reconstruct and predict the images accurately, but also to plan well in this space using $f_{\mathcal{Z}}$. 


The inputs to the planning algorithm are the two high-dimensional observations $\mathbf{x}^i$ and $\mathbf{x}^f$, corresponding to the initial and final (goal) states $\mathbf{s}^i$ and $\mathbf{s}^f$. We encode these two high-dimensional observations to the latent space observations $\mathbf{z}^i$ and $\mathbf{z}^f$. We sample a random set of $H$ actions $\mathbf{u}_{1:H}$ and apply them to the model we have learned in the latent space $\mathcal{Z}$, starting from the initial observation $\mathbf{z}^i$. This generates a reference trajectory $\{\bar{\mathbf{z}}_1=\mathbf{z}^i,\bar{\mathbf{u}}_1=\mathbf{u}_1,\bar{\mathbf{z}}_2,\bar{\mathbf{u}}_2=\mathbf{u}_2,\ldots,\bar{\mathbf{z}}_H,\bar{\mathbf{u}}_H=\mathbf{u}_H,\bar{\mathbf{z}}_{H+1}\}$ of size $H$. We pass this reference trajectory to iLQR and it returns the set of actions $\mathbf{u}^*_{1:H}$ that has been iteratively optimized to minimize a quadratic cost similar to~\eqref{eq:SLOC} in the latent space $\mathcal{Z}$. We apply $\mathbf{u}^*_1$ to the dynamical system, observe the next state's observation $\mathbf{x}_2$, and encode it to the latent space observation $\mathbf{z}_2$. We then generate another reference trajectory by starting from $\mathbf{z}_2$ and applying the sequence of $H$ actions $\{\mathbf{u}^*_2,\ldots,\mathbf{u}^*_{H},\mathbf{u}_{H+1}\}$, where $\mathbf{u}_{H+1}$ is a random action. We then run iLQR with this trajectory and apply the first action in the set of $H$ actions it returns to the system. We continue this process for $T$ (the planning horizon) steps.

\section{Experiments}
\label{sec:experiments}

In this section, we compare the performance of our proposed RCE model with that of E2C in terms of both prediction and planning in the four domains of~\cite{watter2015embed}. To generate our training and test sets, each consists of triples $(\mathbf{x}_t,\mathbf{u}_t,\mathbf{x}_{t+1})$, we first sample a state $\mathbf{s}_t$ and generate its corresponding observation $\mathbf{x}_t$. We then take an action $\mathbf{u}_t$ and add a Gaussian noise with covariance $\mathbf{\Sigma}_{\mathbf{n}^{\mathcal{S}}}$ according to Eq.~\ref{eq:true_dynamic} to obtain the next state $\mathbf{s}_{t+1}$, which is used to generate the next observation $\mathbf{x}_{t+1}$. We consider both deterministic ($\mathbf{\Sigma}_{\mathbf{n}^{\mathcal{S}}} = \textbf{0}$) and stochastic scenarios. In the stochastic case, we add noise to the system with different values of $\mathbf{\Sigma}_{\mathbf{n}^{\mathcal{S}}}$ and evaluate the models performance under noise. 

In each of the four domains used in our experiments, we compare the performance of RCE and that of E2C in terms of four different factors (see Tables~\ref{tbl:planar_sys}--~\ref{tbl:arm}). {\bf 1)} {\em Reconstruction Loss} is the loss in reconstructing $\mathbf{x}_t$ using the encoder and decoder. {\bf 2)} {\em Prediction Loss} is the loss in predicting $\mathbf{x}_{t+1}$, given $\mathbf{x}_t$ and $\mathbf{u}_t$, using the encoder, decoder, and transition network. Both reconstruction and prediction loss are computed based on binary cross entropy. {\bf 3)} {\em Planning Loss} is computed based on the following quadratic loss:
\begin{equation}
J = \sum \limits_{t=1}^T (\mathbf{s}_t - \mathbf{s}^f)^{\top} \mathbf{Q} (\mathbf{s}_t - \mathbf{s}^f) + \mathbf{u}_t^{\top} \mathbf{R}\mathbf{u}_t.
\label{eq:traj_loss}
\end{equation}

\begin{table*}[!b]
\caption{RCE and E2C Comparison -- Planar System}
\vspace{-.25cm}
\small
\begin{center}
\begin{tabular}{ c c | c c c c}
\textbf{$\mathbf{\Sigma}_{\mathbf{n}^{\mathcal{S}}}$} & \textbf{Algorithm} & \textbf{Reconstruction Loss} & \textbf{Prediction Loss} & \textbf{Planning Loss} & \textbf{Success Rate} \\
\hline
	  \multirow{ 2}{*}{0 }	 		&\textbf{RCE }	&  $3.6 \pm 1.7$ & $6.2 \pm 2.8$  & $21.4  \pm 2.9$ & $100 \% $\\
&\textbf{E2C} & $7.4 \pm 1.7$ & $9.3 \pm 2.8$ & $26.3 \pm 4.9$ & $100 \%$ \\
\cline{1-6}
  \multirow{ 2}{*}{1}	 		&\textbf{RCE }	 & $8.3 \pm 5.5$ & $10.1 \pm 6.2$  & $25.4 \pm 3.6$ & $100 \% $\\
																			 	&\textbf{E2C}    & $19.2 \pm 5.1$ & $28.3 \pm 10.2$   & $34.1 \pm 9.5$  & $95 \%$ \\
\cline{1-6}
 \multirow{ 2}{*}{2}	 &\textbf{RCE }	&  $12.3 \pm 4.9$ & $17.3 \pm 6.2$  & $36.4  \pm 10.3$ & $95 \% $\\
						   	 &\textbf{E2C}  & $37.1 \pm 10.5$ & $ 45.8 \pm 13.1$   & $63.7 \pm 16.3$  & $75 \%$  \\
\cline{1-6}
 \multirow{ 2}{*}{5 }	 &\textbf{RCE }	&  $25.2 \pm 6.1$ & $27.3 \pm 8.2$  & $50.3 \pm 14.5$ & $85 \% $\\
							 &\textbf{E2C}      & $60.3 \pm 18.2$ & $78.3 \pm 15.0$   & $112.4 \pm 30.2$  & $45 \%$  \\
							 
\end{tabular}
\end{center}
\label{tbl:planar_sys}
\vspace{-.3cm}
\end{table*}

\begin{figure*}[!b]
\centering
\includegraphics[trim = 0mm 10mm 0mm 14mm,width=15.5cm]{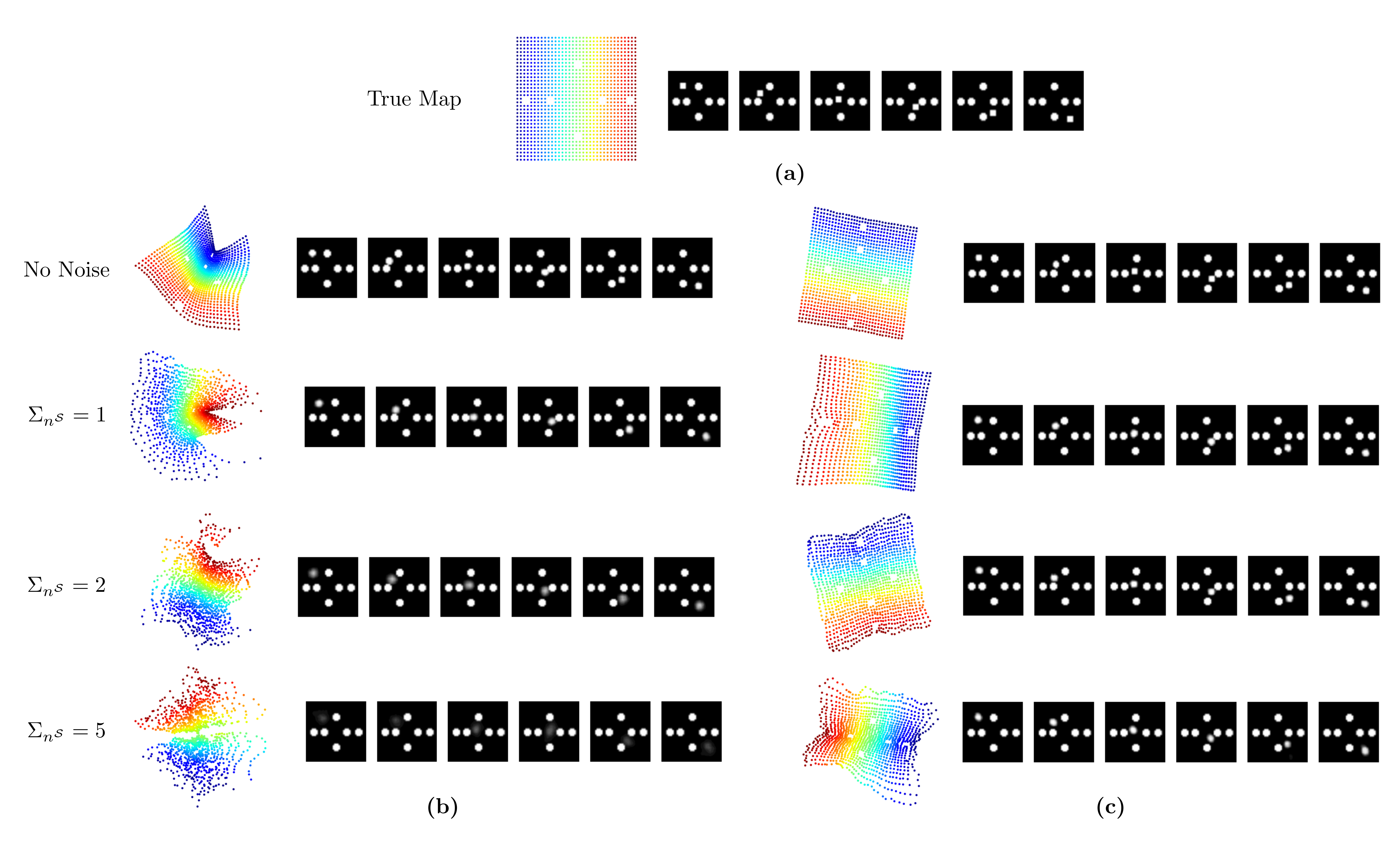} 
\vspace{-.2cm}
\caption{{\em (a) Left:} The true state space of the planar system. Each point on the map corresponds to one image in the dataset. {\em (a) Right:} A random trajectory. Each image is $40 \times 40$ black and white. The circles show the obstacles and the square is the agent in the domain. {\em (b)} Reconstructed map and predicted trajectory in the latent space of the E2C model for different noise levels. {\em (c)} Reconstructed map and predicted trajectory in the latent space of the RCE model for different noise levels.}
\label{fig: maptraj}
\end{figure*}

We apply the sequence of actions returned by iLQR to the dynamical system and report the value of the loss in Eq.~\ref{eq:traj_loss}. {\bf 4)} {\em Success Rate} shows the number of times the agents reaches the goal within the planning horizon $T$, and remains near the goal in case it reaches it in less than $T$ steps. For each of the domains, all the results are averaged over $20$ runs. The details of our implementations, including the network's structure, the size of the latent space, and the planning horizon are specified in Appendix~\ref{sec:implement}.

\vspace{-.2cm}
\subsection{Planar System} 
Consider an agent in a surrounded area, whose goal is to navigate from a corner to the opposite one, while avoiding the six obstacles in this area. The system is observed through a set of $40 \times 40$ pixel images taken from the top, which specify the agent's location in the area. Actions are two-dimensional and specify the direction of the agent's movement. 


Table~\ref{tbl:planar_sys} shows that RCE outperforms E2C in both prediction/reconstruction and planning in this domain. The Gaussian noise we add to the system has a diagonal covariance matrix with equal variance in all dimensions. The values mentioned in the table for $\mathbf{\Sigma}_{\mathbf{n}^{\mathcal{S}}}$ are the standard deviation in each dimension.

Figure~\ref{fig: maptraj} shows the latent space representation of data points in the planar system dataset for both RCE and E2C models. RCE has clearly a more robust representation against the noise and is able to predict the defined trajectory with a much higher quality.


\begin{table*}[!t]
\caption{RCE and E2C Comparison -- Inverted Pendulum (Acrobat)}
\vspace{-.35cm}
\small
\begin{center}
\begin{tabular}{ c c | c c c c}
\textbf{$\mathbf{\Sigma}_{\mathbf{n}^{\mathcal{S}}}$} & \textbf{Algorithm} & \textbf{Reconstruction Loss} & \textbf{Prediction Loss} & \textbf{Planning Loss} & \textbf{Success Rate} \\
\hline
 \multirow{ 2}{*}{0}	 & \textbf{RCE }	&  $43.1 \pm 13.2$ & $48.1 \pm 17.2$  & $14.2  \pm 4.6$ & $95 \% $\\
&\textbf{E2C} & $73.2 \pm 20.1$ & $79.6 \pm 32.6$ & $16.1 \pm 2.9$ & $90 \%$  \\
\cline{1-6}
  \multirow{ 2}{*}{1 }	 &\textbf{RCE }		&  $61.1 \pm 16.2$ & $70.2 \pm 36.1$  & $17.3  \pm 7.1$ & $85 \% $\\
  					 &\textbf{E2C}    	& $97.1 \pm 34.1$ & $109.7 \pm 58.2$   & $29.9 \pm 9.2$  & $60 \%$ \\
\cline{1-6}
 \multirow{ 2}{*}{2 }	 &\textbf{RCE }	&  $92.11 \pm 35.4$ & $106.4 \pm 53.2$  & $27.5  \pm 6.6$ & $70 \% $\\
						   		 &\textbf{E2C}  & $140.2 \pm 47.1$ & $179.5 \pm 61.1$   & $40.7 \pm 11.8$  & $40 \%$  \\
\end{tabular}
\end{center}
\label{tbl:acrobat}
\vspace{-.6cm}
\end{table*}

\begin{table*}[!b]
\vspace{-.2cm}
\caption{RCE and E2C Comparison -- Cart-pole Balancing}
\vspace{-.35cm}
\small
\begin{center}
\begin{tabular}{ c c | c c c c}
\textbf{$\mathbf{\Sigma}_{\mathbf{n}^{\mathcal{S}}}$} & \textbf{Algorithm} & \textbf{Reconstruction Loss} & \textbf{Prediction Loss} & \textbf{Planning Loss} & \textbf{Success Rate} \\
\hline
 \multirow{ 2}{*}{0}	 & \textbf{RCE }	&  $33.2 \pm 15.6$ & $42.1 \pm 26.9$  & $21.2  \pm 6.3$ & $90 \% $\\
																&\textbf{E2C} 		& $44.9 \pm 17.0$  & $57.3 \pm 22.9$  & $25.3 \pm 4.8$ 	& $85 \%$  \\
\cline{1-6}
  \multirow{ 2}{*}{1 }	 &\textbf{RCE }		&  $52.1 \pm 20.3$ & $63.3 \pm 27.2$  & $28.4  \pm 5.5$ & $80 \% $\\
						 &\textbf{E2C}    	&  $70.2 \pm 23.7$  & $90.5 \pm 42.4$ & $39.8 \pm 5.2$  & $70 \%$ \\
\cline{1-6}
 \multirow{ 2}{*}{2 }	 &\textbf{RCE }	&  $77.6 \pm 30.2$ & $88.4 \pm 38.3$  & $42.2  \pm 8.3$ & $70 \% $\\
				 		 &\textbf{E2C}  & $112.6 \pm 39.2$  & $133.0 \pm 56.5$  & $67.2 \pm 9.3$ & $40 \%$  \\
\end{tabular}
\end{center}
\label{tbl:cartpole}
\vspace{-.2cm}
\end{table*}

\begin{table*}[!b]
\vspace{-.2cm}
\caption{RCE and E2C Comparison -- Robot Arm}
\vspace{-.35cm}
\small
\begin{center}
\begin{tabular}{ c c | c c c c}
\textbf{$\mathbf{\Sigma}_{\mathbf{n}^{\mathcal{S}}}$} & \textbf{Algorithm} & \textbf{Reconstruction Loss} & \textbf{Prediction Loss} & \textbf{Planning Loss} & \textbf{Success Rate} \\
\hline
 \multirow{ 2}{*}{0}	 & \textbf{RCE }	&  $60.5 \pm 27.1$ & $69.9 \pm 32.2$  & $81.3  \pm 35.5$ & $90 \% $\\
																&\textbf{E2C} 		& $71.3 \pm 19.5$  & $83.4 \pm 28.6$  & $90.23 \pm 47.38$ 	& $90\%$  \\
\cline{1-6}
  \multirow{ 2}{*}{1 }	 &\textbf{RCE }		&  $96.5 \pm 34.4$ & $112.6 \pm 42.2$  & $106.2  \pm 50.8$ & $80 \% $\\
						 &\textbf{E2C}    	&  $138.1\pm 42.5$  & $172.2 \pm 58.3$ & $155.2 \pm 70.1$  & $65 \%$ \\

\end{tabular}
\end{center}
\label{tbl:arm}
\end{table*}

\vspace{-.2cm}
\subsection{Inverted Pendulum (Acrobat)} 
This is the classic problem of controlling an inverted pendulum~\cite{Wang96AF} from $48 \times 48$ pixel images. The goal in this task is to swing up and balance an underactuated pendulum from a resting position (pendulum hanging down). The true state space of the system $\mathcal{S}$ has two dimensions: angle and angular velocity. To keep the Markovian property in the observation space, we need to have two images in each observation $\mathbf{x}_t$, since each image shows only position of the pendulum and does not have any information about its velocity. 

Table~\ref{tbl:acrobat} contains our results of comparing RCE and E2C models in this task. Learning the dynamics in this problem is harder than reconstructing the images. Therefore, at the beginning of the training we set the weights of the two middle terms in Eq.~\ref{eq:RCE_obj_func_1} to 10, and eventually decrease them to 1. The results show that RCE outperforms than E2C, and the difference is significant under noisy conditions.

\vspace{-.2cm}
\subsection{Cart-pole Balancing} 
This is the visual version of the classic task of controlling a cart-pole system~\cite{Sutton98RL}. The goal in this task is to balance a pole on a moving cart, while the cart avoids hitting the left and right boundaries. The control (action) is 1-dimensional and is the force applied to the cart. The original state of the system $\mathbf{s}_t$ is 4-dimensional. The observation $\mathbf{x}_t$ is a history of two $80 \times 80$ pixel images (to maintain the Markovian property). Due to the relatively large size of the images, we use convolutional layers in encoder and decoder. To make a fair comparison with E2C, we also set the dimension of the latent space $\mathcal{Z}$ to 8.

Table~\ref{tbl:cartpole} contains our results of comparing RCE and E2C models in this task. We again observe a similar trend: RCE outperforms E2C in both noiseless and noisy settings and is significantly more robust. 

\vspace{-.2cm}
\subsection{Three-link Robot Arm} 
\vspace{-.1cm}
The goal in this task is to move a three-link planar robot arm from an initial position to a final position (both chosen randomly). The real state of the system $\mathcal{S}$ is $6$-dimensional and the actions are $3$-dimensional, representing the force applied to each joint of the arm. We use two $128 \times 128$ pixel images of the arm as observation $\mathbf{x}$. To be consistent with the E2C model, we choose the latent space $\mathcal{Z}$ to be $8$-dimensional.

Table~\ref{tbl:arm} contains our results of comparing RCE and E2C models in this task. Similar to the other domains, our results show that the RCE model is more robust to noise than E2C. 

\vspace{-.1cm}
\section{Conclusions}
\label{sec:conclu}
\vspace{-.1cm}
In summary, we proposed a new method to embed the high-dimensional observations of a MDP in such a way that both the embeddings and locally optimal controllers are robust w.r.t.~the noise in the system's dynamics. Our RCE model enjoys a clean separation between the generative graphical model and its recognition model. The RCE's generative model explicitly treats the unknown linearization points as random variables, while the recognition model is factorized in reverse direction to take into account the future observation as well as exploiting determinism in the transition dynamics. Our experimental results demonstrate that the RCE's predictive and planning performance are better and significantly more robust than that of E2C in all the four benchmarks where E2C performance has been measured~\cite{watter2015embed}.


\newpage

\bibliographystyle{abbrv}
\bibliography{Robust_Control_of_High_Dimensional_MDPs}

\newpage
\onecolumn
\appendix
\section{Objective Function}
\label{sec:obj-func}

\begin{proof}[Proof of Lemma~\ref{lm:lem}]

We define $q^*= q(\mathbf{z}_t,\bar{\mathbf{z}}_t,\hat{\mathbf{z}}_{t+1}|\mathbf{x}_t,\mathbf{x}_{t+1}, \mathbf{u}_t, \bar{\mathbf{u}}_t)$. 
Our goal is to define a variational lower-bound on the conditional log-likelihood $\log p(\mathbf{x}_{t+1}| \mathbf{x}_t, \mathbf{u}_t)$. The likelihood $p(\mathbf{x}_{t+1}| \mathbf{x}_t, \mathbf{u}_t)$ may be written as
\begin{align*}
p(\mathbf{x}_{t+1}| \mathbf{x}_t, \mathbf{u}_t) &= \int p(\mathbf{x}_{t+1}, \bar{\mathbf{u}}_t | \mathbf{x}_t, \mathbf{u}_t)\;d\bar{\mathbf{u}}_t = \int \frac{p(\mathbf{x}_{t+1}, \bar{\mathbf{u}}_t, \mathbf{x}_t, \mathbf{u}_t)}{p(\mathbf{x}_t, \mathbf{u}_t)}\;d\bar{\mathbf{u}}_t \\ 
&= \int \frac{p(\mathbf{x}_{t+1} | \bar{\mathbf{u}}_t, \mathbf{x}_t, \mathbf{u}_t)\;p(\bar{\mathbf{u}}_t | \mathbf{x}_t, \mathbf{u}_t)\;p(\mathbf{x}_t, \mathbf{u}_t)}{p(\mathbf{x}_t, \mathbf{u}_t)}\;d\bar{\mathbf{u}}_t = \int p(\mathbf{x}_{t+1}| \mathbf{x}_t, \mathbf{u}_t, \bar{\mathbf{u}}_t)\;p(\bar{\mathbf{u}}_t | \mathbf{x}_t, \mathbf{u}_t)\;d\bar{\mathbf{u}}_t \\
&= \int p(\mathbf{x}_{t+1}| \mathbf{x}_t, \mathbf{u}_t, \bar{\mathbf{u}}_t)\;p(\bar{\mathbf{u}}_t | \mathbf{u}_t)\;d\bar{\mathbf{u}}_t.
\end{align*}
Now in order to derive a variational lower-bound on the conditional log-likelihood $\log p(\mathbf{x}_{t+1}| \mathbf{x}_t, \mathbf{u}_t)$, we shall derive a variational lower-bound on the conditional log-likelihood $\log p(\mathbf{x}_{t+1}| \mathbf{x}_t, \mathbf{u}_t, \bar{\mathbf{u}}_t)$ as
\begin{align*}
\log p(\mathbf{x}_{t+1}| \mathbf{x}_t, \mathbf{u}_t, \bar{\mathbf{u}}_t) &\geq \mathbb{E}_{q^*}\big[\log p(\mathbf{x}_{t+1} | \mathbf{z}_t, \bar{\mathbf{z}}_t, \hat{\mathbf{z}}_{t+1}, \mathbf{x}_t, \mathbf{u}_t, \bar{\mathbf{u}}_t)\big] - \text{KL}\big(q^* \parallel p(\mathbf{z}_t, \bar{\mathbf{z}}_t, \hat{\mathbf{z}}_{t+1} | \mathbf{x}_t, \mathbf{u}_t, \bar{\mathbf{u}}_t)\big) \\
&= \mathbb{E}_{q^*}\big[\log p(\mathbf{x}_{t+1} | \mathbf{z}_t, \bar{\mathbf{z}}_t, \hat{\mathbf{z}}_{t+1}, \mathbf{x}_t, \mathbf{u}_t, \bar{\mathbf{u}}_t) + \log p(\mathbf{z}_t, \bar{\mathbf{z}}_t, \hat{\mathbf{z}}_{t+1} | \mathbf{x}_t, \mathbf{u}_t, \bar{\mathbf{u}}_t) - \log q^*\big] \\ 
&= \mathbb{E}_{q^*}\big[\log p(\mathbf{x}_{t+1}, \mathbf{z}_t, \bar{\mathbf{z}}_t, \hat{\mathbf{z}}_{t+1} | \mathbf{x}_t, \mathbf{u}_t, \bar{\mathbf{u}}_t) - \log q^*\big] \\
&\stackrel{\text{(a)}}{=} \mathbb{E}_{q^*}\big[\log p(\mathbf{x}_{t+1} | \hat{\mathbf{z}}_{t+1}) + \log p(\mathbf{z}_t | \mathbf{x}_t) + \log p(\bar{\mathbf{z}}_t | \mathbf{x}_t) + \log\delta(\hat{\mathbf{z}}_{t+1} | \mathbf{z}_t, \bar{\mathbf{z}}_t, \mathbf{u}_t, \bar{\mathbf{u}}_t) - \\ 
&\qquad\quad\;\; \log q_\phi(\hat{\mathbf{z}}_{t+1} | \mathbf{x}_{t+1}) - \log q_\varphi(\bar{\mathbf{z}}_t | \mathbf{x}_t, \hat{\mathbf{z}}_{t+1}) - \log\delta(\mathbf{z}_t | \bar{\mathbf{z}}_t, \hat{\mathbf{z}}_{t+1}, \mathbf{u}_t, \bar{\mathbf{u}}_t)\big] \\
&\stackrel{\text{(b)}}{=} \mathbb{E}_{q_\phi(\hat{\mathbf{z}}_{t+1} | \mathbf{x}_{t+1})}\big[\log p(\mathbf{x}_{t+1} | \hat{\mathbf{z}}_{t+1})\big] + \mathbb{E}_{\stackrel{q_\phi(\hat{\mathbf{z}}_{t+1} | \mathbf{x}_{t+1})}{q_\varphi(\bar{\mathbf{z}}_t | \mathbf{x}_t, \hat{\mathbf{z}}_{t+1})}}\big[\log p(\mathbf{z}_t | \mathbf{x}_t)\big] \\ 
&+ \mathbb{E}_{\stackrel{q_\phi(\hat{\mathbf{z}}_{t+1} | \mathbf{x}_{t+1})}{q_\varphi(\bar{\mathbf{z}}_t | \mathbf{x}_t, \hat{\mathbf{z}}_{t+1})}}\big[\log p(\bar{\mathbf{z}}_t | \mathbf{x}_t)\big] - \underbrace{\mathbb{E}_{q_\phi(\hat{\mathbf{z}}_{t+1} | \mathbf{x}_{t+1})}\big[\log q_\phi(\hat{\mathbf{z}}_{t+1} | \mathbf{x}_{t+1})\big]}_{H\big(q_\phi(\hat{\mathbf{z}}_{t+1} | \mathbf{x}_{t+1})\big)} \\ 
&- \mathbb{E}_{\stackrel{q_\phi(\hat{\mathbf{z}}_{t+1} | \mathbf{x}_{t+1})}{q_\varphi(\bar{\mathbf{z}}_t | \mathbf{x}_t, \hat{\mathbf{z}}_{t+1})}}\big[\log q_\varphi(\bar{\mathbf{z}}_t | \mathbf{x}_t, \hat{\mathbf{z}}_{t+1})\big] \\
&= \mathbb{E}_{q_\phi(\hat{\mathbf{z}}_{t+1} | \mathbf{x}_{t+1})}\big[\log p(\mathbf{x}_{t+1} | \hat{\mathbf{z}}_{t+1})\big] + \mathbb{E}_{\stackrel{q_\phi(\hat{\mathbf{z}}_{t+1} | \mathbf{x}_{t+1})}{q_\varphi(\bar{\mathbf{z}}_t | \mathbf{x}_t, \hat{\mathbf{z}}_{t+1})}}\big[\log p\big(\delta(\bar{\mathbf{z}}_t, \hat{\mathbf{z}}_{t+1}, \mathbf{u}_t, \bar{\mathbf{u}}_t)\big)\big] \\
&+ H\big(q_\phi(\hat{\mathbf{z}}_{t+1} | \mathbf{x}_{t+1})\big) - \mathbb{E}_{q_\phi(\hat{\mathbf{z}}_{t+1} | \mathbf{x}_{t+1})}\big[\text{KL}\big(q_\varphi(\bar{\mathbf{z}}_t | \mathbf{x}_t, \hat{\mathbf{z}}_{t+1}) \parallel p(\bar{\mathbf{z}}_t | \mathbf{x}_t)\big)\big] = \mathcal{L}_t^{\text{RCE}}.
\end{align*}
{\bf (a)} We replace $\log p(\mathbf{x}_{t+1}, \mathbf{z}_t, \bar{\mathbf{z}}_t, \hat{\mathbf{z}}_{t+1} | \mathbf{x}_t, \mathbf{u}_t, \bar{\mathbf{u}}_t)$ and $q^*$ using Equations~\ref{eq:gen-model} and~\ref{eq:q_fac}. 

{\bf (b)} The terms that contain $\delta(.|.)$ are zero.
\end{proof}

The terms in the variational lower-bound $\mathcal{L}_t^{\text{RCE}}$ can be written in closed form as

\begin{enumerate}
\item $\mathbb{E}_{q_{\phi}(\hat{\mathbf{z}}_{t+1}|\mathbf{x}_{t+1})} \big [\log p(\mathbf{x}_{t+1}|\hat{\mathbf{z}}_{t+1}) \big ]$

Using the reparameterization trick~\cite{vae2014}, we should first sample from $\mathcal{N}(\mathbf{\mu}_{\phi}(\mathbf{x}_{t+1}),\mathbf{\Sigma}_{\phi}(\mathbf{x}_{t+1}))$, i.e. we sample from a standard normal distribution $\mathbf{\epsilon} \sim \mathcal{N}(\mathbf{0},\mathbf{I})$ and transform it using $\mu_\phi(\mathbf{x}_{t+1})$ and $\mathbf{\Sigma}_\phi(\mathbf{x}_{t+1})$. When the covariance matrix $\mathbf{\Sigma}_\phi(\mathbf{x}_{t+1}) = \mathrm{diag}(\mathbf{\sigma}^2(\mathbf{x}_{t+1}))$ is diagonal, then the transformation is simply $\hat{\mathbf{z}}_{t+1} = \mathbf{\mu}_{\phi}(\mathbf{x}_{t+1})+\mathbf{\sigma}_{\phi}(\mathbf{x}_{t+1}) \odot \mathbf{\epsilon}$. Considering a Bernoulli distribution for the posterior of $\mathbf{x}_{t+1}$, the term inside the expectation is a binary cross entropy. 

\item $\mathbb{E}_{q_{\phi}(\hat{\mathbf{z}}_{t+1}|\mathbf{x}_{t+1})} \Big [ \text{KL} \big ( q_{\varphi}(\bar{\mathbf{z}}_t| \hat{\mathbf{z}}_{t+1},\mathbf{x}_t) \parallel p(\bar{\mathbf{z}}_t| \mathbf{x}_t)  \big ) \Big ]$

Similar to the previous term, to estimate the expected value we first need to sample from $\mathcal{N}(\mathbf{\mu}_{\phi}(\mathbf{x}_{t+1}),\mathbf{\Sigma}_{\phi}(\mathbf{x}_{t+1}))$, using the reparameterization trick. Note that $p(\bar{\mathbf{z}}_t| \mathbf{x}_t) = p(\mathbf{z}_t| \mathbf{x}_t)$ and $p(\mathbf{z}_t| \mathbf{x}_t)= q(\mathbf{z}_t| \mathbf{x}_t) = \mathcal{N}(\mathbf{\mu}_{\phi}(\mathbf{x}_t),\mathbf{\Sigma}_{\phi}(\mathbf{x}_t))$. For the $q_{\varphi}$ network, which is the transition network in our model, we have $q_{\varphi}(\bar{\mathbf{z}}_t| \hat{\mathbf{z}}_{t+1},\mathbf{x}_t) = \mathcal{N}(\mathbf{\mu}_{\varphi},\mathbf{\Sigma}_{\varphi})$. The KL term can be written as
\begin{align*}
\text{KL} \big(q_{\varphi}(\bar{\mathbf{z}}_t | \hat{\mathbf{z}}_{t+1},\mathbf{x}_t) \parallel p(\bar{\mathbf{z}}_t| \mathbf{x}_t)\big) = 
\frac{1}{2} \Big(\text{Tr}\big({\mathbf{\Sigma}_{\phi}(\mathbf{x}_t)}^{-1}\mathbf{\Sigma}_{\varphi}\big) &+ \big(\mu_{\phi}(\mathbf{x}_t) - \mu_{\varphi}\big)^{\top} {\mathbf{\Sigma}_{\phi}(\mathbf{x}_t)}^{-1} \big(\mu_{\phi}(\mathbf{x}_t) - \mu_{\varphi}\big) \\ 
&+ \log (\frac{|\mathbf{\Sigma}_{\phi}(\mathbf{x}_t)|}{|\mathbf{\Sigma}_{\varphi}|}) -n_z\Big). 
\end{align*}
\item $\text{H} \big ( q_{\phi}(\hat{\mathbf{z}}_{t+1}| \mathbf{x}_{t+1}) \big)$

The entropy term for the encoding network can be easily written in closed form as
\begin{equation}
\text{H} \big ( q_{\phi}(\hat{\mathbf{z}}_{t+1}| \mathbf{x}_{t+1}) \big )= \frac{1}{2} \log \big((2\pi e)^{n_z}|\mathbf{\Sigma}_{\phi}(\mathbf{x}_{t+1})| \big).
\end{equation}

\item $\mathbb{E}_{\substack{ q_{\phi}(\hat{\mathbf{z}}_{t+1}|\mathbf{x}_{t+1})\\q_{\varphi}(\bar{\mathbf{z}}_t|\mathbf{x}_t,\hat{\mathbf{z}}_{t+1}) } }\big [ \log p(\mathbf{z}_t| \mathbf{x}_t) \big ]$

Here we first need to sample from $\mathcal{N}(\mathbf{\mu}_{\phi}(\mathbf{x}_{t+1}),\mathbf{\Sigma}_{\phi}(\mathbf{x}_{t+1}))$ and $\mathcal{N}(\mathbf{\mu}_{\varphi},\mathbf{\Sigma}_{\varphi})$, using the reparameterization trick. Given that $p(\mathbf{z}_t | \mathbf{x}_t) = \mathcal{N}(\mathbf{\mu}_{\phi}(\mathbf{x}_t),\mathbf{\Sigma}_{\phi}(\mathbf{x}_t))$, the log term inside the expectation means that we want the output of transition network to be close to the mean of its distribution, up to some constant. 

\begin{equation}
\log p(\mathbf{z}_t| \mathbf{x}_t)= - \frac{1}{2} \Big ( \log \big((2\pi e)^{n_z} |\mathbf{\Sigma}_{\phi}(\mathbf{x}_t)| \big) +  (\mathbf{z}_t - \mu_{\phi}(\mathbf{x}_t))^{\top} {\mathbf{\Sigma}_{\phi}(\mathbf{x}_t)}^{-1} (\mathbf{z}_t - \mu_{\phi}(\mathbf{x}_t))\Big ).
\end{equation}
\end{enumerate}

\section{Implementation}
\label{sec:implement}

\textbf{Transition model structure:} $\mathbf{x}_t$ goes through one hidden layer with $\ell_1$ units and $\hat{\mathbf{z}}_{t+1}$ goes through one hidden layer with $\ell_2$ units. The outputs of the two hidden layers are concatenated and go through a network with two hidden layers of size $\ell_3$ and $\ell_4$, respectively, to build $\mu_{\varphi}$ and $\mathbf{\Sigma}_{\varphi}$.  $\bar{\mathbf{z}}_t$ is sampled from this distribution and is concatenated by the action. The result goes through a three-layer network with $\ell_5$, $\ell_6$, and $\ell_7$ units to build $\mathbf{M}_t$, $\mathbf{B}_t$, and $\mathbf{c}_t$. 

In the following we will specify the values for $\ell_i$'s for each of the four tasks used in our experiments.

\subsection{Planar system}

\textbf{Input: } $40 \times 40$ images  ($1600$ dimensions). 2-dimensional actions. $5000$ training samples of the form $(\mathbf{x}_t,\mathbf{u}_t,\mathbf{x}_{t+1})$

\textbf{Latent space: }2-dimensional 

\textbf{Encoder:} 3 Layers: 300 units- 300 units- 4 units (2 for mean and 2 for the variance of the Gaussian distribution)

\textbf{Decoder: } 3 Layers: 300 units- 300 units- 1600 units 

\textbf{Transition: } $\ell_1 = 100$- $\ell_2 = 5$- $\ell_3 = 100$- $\ell_4 = 4$- $\ell_5 = 20$- $\ell_6 = 20$- $\ell_7 = 10$

\textbf{Number of control actions: } or the planning horizon $T = 40$

\subsection{Inverted Pendulum}

\textbf{Input: } Two $48 \times 48$ images ($4608$ dimensions). 1-dimensional actions. $5000$ training samples of the form $(\mathbf{x}_t,\mathbf{u}_t,\mathbf{x}_{t+1})$

\textbf{Latent space: }3-dimensional 

\textbf{Encoder:} 3 Layers: 500 units- 500 units- 6 units (3 for mean and 3 for the variance of the Gaussian distribution)

\textbf{Decoder: } 3 Layers: 500 units- 500 units- 4608 units 

\textbf{Transition: } $\ell_1 = 200$- $\ell_2 = 10$- $\ell_3 = 200$- $\ell_4 = 6$- $\ell_5 = 30$- $\ell_6 = 30$- $\ell_7 = 12$

\textbf{Number of control actions: } or the planning horizon $T = 100$


\subsection{Cart-pole Balancing}

\textbf{Input: } Two $80 \times 80$ images ($12800$ dimensions). 1-dimensional actions. $15000$ training samples of the form $(\mathbf{x}_t,\mathbf{u}_t,\mathbf{x}_{t+1})$

\textbf{Latent space: } 8-dimensional 

\textbf{Encoder:} 6 Layers: convolutional layer: $32 \times 5 \times 5$; stride (1,1) - convolutional layer: $32 \times 5 \times 5$; stride (2,2) - convolutional layer: $32 \times 5 \times 5$; stride (2,2)  -convolutional layer: $10 \times 5 \times 5$; stride (2,2) - 200 units- 16 units (8 for mean and 8 for the variance of the Gaussian distribution)

\textbf{Decoder: } 6 Layers: 200 units- 1000 units- convolutional layer: $32 \times 5 \times 5$; stride (1,1)- Upsampling (2,2)- convolutional layer: $32 \times 5 \times 5$; stride (1,1)- Upsampling (2,2)- convolutional layer: $32 \times 5 \times 5$; stride (1,1)- Upsampling (2,2)- convolutional layer: $2 \times 5 \times 5$; stride (1,1) 

\textbf{Transition: } $\ell_1 = 300$- $\ell_2 = 10$- $\ell_3 = 300$- $\ell_4 = 16$- $\ell_5 = 40$- $\ell_6 = 40$- $\ell_7 = 32$

\textbf{Number of control actions: } or the planning horizon $T = 100$


\subsection{Three-Link Robot Arm}

\textbf{Input: } Two $128\times 128$ images ($32768$ dimensions). 3-dimensional actions. $30000$ training samples of the form $(\mathbf{x}_t,\mathbf{u}_t,\mathbf{x}_{t+1})$

\textbf{Latent space: }8-dimensional 

\textbf{Encoder:} 6 Layers: convolutional layer: $64 \times 5 \times 5$; stride (1,1) - convolutional layer: $32 \times 5 \times 5$; stride (2,2) - convolutional layer: $32 \times 5 \times 5$; stride (2,2)  -convolutional layer: $10 \times 5 \times 5$; stride (2,2) - 500 units- 16 units (8 for mean and 8 for the variance of the Gaussian distribution)

\textbf{Decoder: } 6 Layers: 500 units- 2560 units- convolutional layer: $32 \times 5 \times 5$; stride (1,1)- Upsampling (2,2)- convolutional layer: $32 \times 5 \times 5$; stride (1,1)- Upsampling (2,2)- convolutional layer: $32 \times 5 \times 5$; stride (1,1)- Upsampling (2,2)- convolutional layer: $2 \times 5 \times 5$; stride (1,1) 

\textbf{Transition: } $\ell_1 = 400$- $\ell_2 = 10$- $\ell_3 = 400$- $\ell_4 = 6$- $\ell_5 = 40$- $\ell_6 = 40$- $\ell_7 = 48$

\textbf{Number of control actions: } or the planning horizon $T = 100$

\section{E2C Graphical Model}
\label{sec:appendix-E2C}

Since the original E2C paper does not provide a graphical model for its generative and recognition models, in this section, we present a graphical model that faithfully corresponds to the lower-bound reported in Equation~12 of the E2C paper~\cite{watter2015embed}. 

At high-level, the generative model involves two latent variables $\mathbf{z}_t$ and $\hat{\mathbf{z}}_{t+1}$, with the joint factorization (note that we omit the dependency on $\mathbf{u}_t$ for brevity)
\begin{equation*}
p(\mathbf{x}_t,\mathbf{x}_{t+1},\mathbf{z}_t, \hat{\mathbf{z}}_{t+1}) =
p(\mathbf{x}_{t+1}|\hat{\mathbf{z}}_{t+1})\;p(\mathbf{x}_t|\mathbf{z}_t)\;p(\hat{\mathbf{z}}_{t+1}|\mathbf{z}_t,\mathbf{x}_t)\;p(\mathbf{z}_t).
\end{equation*}
With the above generative model, any recognition model of the form (note that we borrow the generative transition dynamic $p(\hat{\mathbf{z}}_{t+1}|\mathbf{z}_t,\mathbf{x}_t)$)
\begin{equation*}
q(\mathbf{z}_t,\hat{\mathbf{z}}_t|\mathbf{x}_{t}, \mathbf{x}_{t+1})=
q(\mathbf{z}_t|\mathbf{x}_t)\;p(\hat{\mathbf{z}}_{t+1}|\mathbf{z}_t,\mathbf{x}_t)
\end{equation*}
gives rise to the following variational lower-bound of the log-pair-marginal 
\begin{align} 
\log\; p(\mathbf{x}_t, \mathbf{x}_{t+1}) &\ge
\mathbb{E}_{q(\mathbf{z}_t,\hat{\mathbf{z}}_{t+1}|\mathbf{x}_t, \mathbf{x}_{t+1})}
\left \{ \log\frac{p(\mathbf{x}_t,\mathbf{x}_{t+1},\mathbf{z}_t,\hat{\mathbf{z}}_{t+1})}{q(\mathbf{z}_t,\hat{\mathbf{z}}_{t+1}|\mathbf{x}_t, \mathbf{x}_{t+1})}\right\} \nonumber \\
&= \mathbb{E}_{q(\mathbf{z}_t|\mathbf{x}_t)} \big[\log p(\mathbf{x}_t|\mathbf{z}_t)\big] + 
\mathbb{E}_{
q(\hat{\mathbf{z}}_{t+1}|\mathbf{x}_{t})} \big[\log p(\mathbf{x}_{t+1}|\hat{\mathbf{z}}_{t+1})\big] - \text{KL}\big(q(\mathbf{z}_t|x_t)\Vert p(\mathbf{z}_t)\big). \label{eq:e2c-rederived-bound}
\end{align}
Note that the form of Equation~\ref{eq:e2c-rederived-bound} above is equivalent to the bound in Equation~12 in~\cite{watter2015embed}. The E2C objective (their Equation~11) includes another auxiliary KL term to maintain the consistency of the embedding as it evolves over time. This term is not needed in our RCE model.

Next, we give our interpretation of Equations~8 and~10 in~\cite{watter2015embed}. We claim that E2C works with the following transition dynamics 
\begin{equation*}
q(\hat{\mathbf{z}}_{t+1}|\mathbf{z}_t,\mathbf{x}_t) = p(\hat{\mathbf{z}}_{t+1}|\mathbf{z}_t,\mathbf{x}_t) = \int_{\bar{\mathbf{z}}_t} p(\hat{\mathbf{z}}_{t+1}|\bar{\mathbf{z}}_t,\mathbf{z}_t)p(\bar{\mathbf{z}}_t|\mathbf{x}_t)
\end{equation*}
where $\bar{\mathbf{z}}_t$ plays the role of the linearization point in the LQR model and $p(\hat{\mathbf{z}}_{t+1}|\bar{\mathbf{z}}_t,\mathbf{z}_t)$ is deterministic (an added Gaussian noise can also be handled in a straightforward manner)
\begin{equation*}
\hat{\mathbf{z}}_{t+1} = \mathbf{A}(\bar{\mathbf{z}}_t)\mathbf{z}_t + \mathbf{B}(\bar{\mathbf{z}}_t)\mathbf{u}_t + \mathbf{o}(\bar{\mathbf{z}}_t).
\end{equation*}
Furthermore, the recognition model has an additional constraint $q(\bar{\mathbf{z}}_t|\mathbf{x}_t) = p(\bar{\mathbf{z}}_t|\mathbf{x}_t) = q(\mathbf{z}_t | \mathbf{x}_t)$.

Under these conditions, the implementation of the lower-bound will give rise to exactly their Equations~8 and~10 (minus some typos). We note that there is a typo in their Equation~10: the matrices and offset of the transition dynamics should be functions of the linearization point $\bar{\mathbf{z}}_t$. The first two lines in Equation~8 describe the sampling of $q(\hat{\mathbf{z}}_{t+1}|\mathbf{x}_t)$: the first line should read as the sampling of the auxiliary variable $\bar{\mathbf{z}}_t$. The second line is the sampling of $\hat{\mathbf{z}}_{t+1}$, where the matrices and offset $\mathbf{A},\mathbf{B},\mathbf{o}$ are functions of $\bar{\mathbf{z}}_t$, sampled in the first line. The second line holds due to the fact that given $\bar{\mathbf{z}}_t$, $\mathbf{z}_{t+1}$ has a linear dynamics with known coefficients, and $\mathbf{z}_t|\mathbf{x}_t$ is Gaussian $\mathcal{N}(\mu_t,\Sigma_t)$ under $q$ and hence can be marginalized out.

\vspace{1cm}
\begin{figure}[!h]
    \centering
    \includegraphics[trim = 0mm 7mm 0mm 10mm,height=5cm]{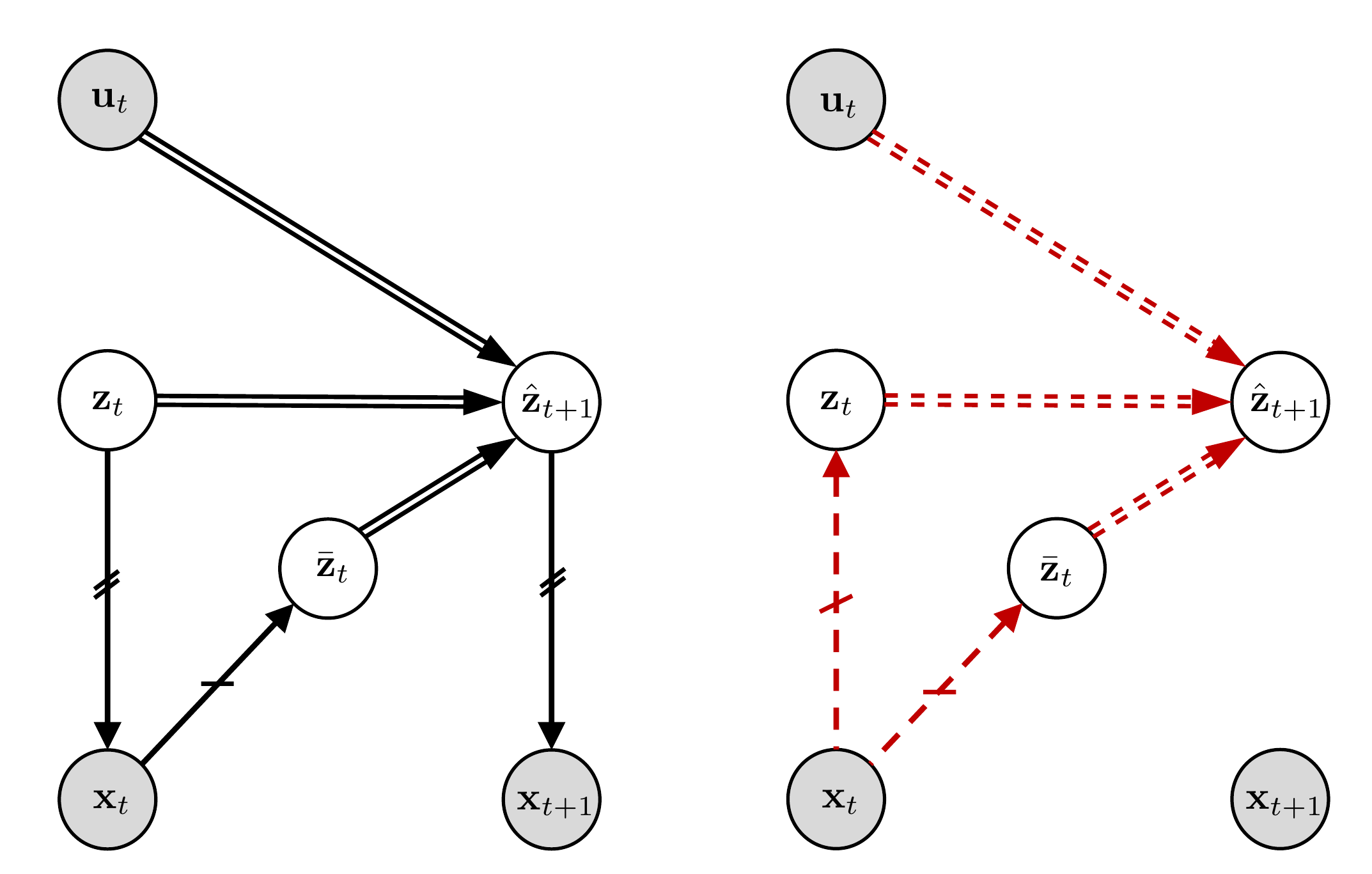} 
    \caption{E2C Graphical Model- Left: generative model ($p$) and right: recognition model ($q$). Note the tying between the dynamics in $p$ and $q$, i.e. $q(\hat{\mathbf{z}}_{t+1}|\mathbf{z}_t,\mathbf{x}_t)=p(\hat{\mathbf{z}}_{t+1}|\mathbf{z}_t,\mathbf{x}_t)$. Also, note the tying of the decoder parameters $p(\mathbf{x}_t| \mathbf{z}_t)$ and $p(\mathbf{x}_{t+1}|\hat{\mathbf{z}}_{t+1})$, which is shown by the hatch marks. The parameter of the networks for $p(\bar{\mathbf{z}}_t|\mathbf{x}_t)$, $q(\bar{\mathbf{z}}_t|\mathbf{x}_t)$, and $q(\mathbf{z}_t|\mathbf{x}_t)$ are also tied, marked by the dashes on this figure.} \label{fig:e2c_models}%
\end{figure}

\end{document}